\newtheorem{theorem}{Theorem}[section]
\newtheorem{lemma}[theorem]{Lemma}
\newtheorem{definition}[theorem]{Definition}
\title
{Integral Geometric Dual Distributions of\\ Multilinear Models}
\author{Sami Sebastian Brandt,~\IEEEmembership{Member,~IEEE}%
\IEEEcompsocitemizethanks{\IEEEcompsocthanksitem S. S. Brandt is with the Department of Computer Science, IT University of Copenhagen, Rued Langgaards Vej 7, 2300 Copenhagen S, Denmark.
}}
\begin{document}






\newcommand{\A}{\mathbf{A}}
\newcommand{\B}{\mathbf{B}}
\newcommand{\bb}{\mathbf{b}}
\newcommand{\I}{\mathbf{I}}
\newcommand{\T}{\mathrm{T}}
\newcommand{\J}{\mathrm{J}\ }
\newcommand{\Y}{\mathbf{Y}}
\newcommand{\Z}{\mathbf{Z}}
\newcommand{\e}{\mathbf{e}}
\newcommand{\Ker}{\mathrm{Ker}\ }
\newcommand{\Ran}{\mathrm{Ran}\ }
\newcommand{\Span}{\mathrm{Span}\ }
\newcommand{\y}{\mathbf{y}}
\newcommand{\z}{\mathbf{z}}
\newcommand{\x}{\mathbf{x}}
\newcommand{\C}{\mathbf{C}}
\newcommand{\U}{\mathbf{U}}
\newcommand{\uu}{\mathbf{u}}
\newcommand{\V}{\mathbf{V}}
\newcommand{\La}{\mathbf{\Lambda}}
\newcommand{\R}{\mathbb{R}}
\newcommand{\Sbb}{\mathbb{S}}
\newcommand{\Pbb}{\mathbb{P}}
\newcommand{\rb}{\mathbf{r}}
\newcommand{\s}{\mathbf{s}}
\newcommand{\Sb}{\mathbf{S}}
\newcommand{\proj}{\mathbf{P}}
\newcommand{\Lo}{\mathbf{L}}
\newcommand{\rank}{\mathrm{rank}}
\newcommand{\dd}{\mathrm{d}}
\newcommand{\sign}{\mathrm{sign}}
\newcommand{\Jx}{\mathbf{J}}
\newcommand{\m}{\mathbf{m}}

\newcommand{\proofend}{\newline \mbox{} \hfill $\Box$}




\maketitle

\begin{abstract} 
  We propose an integral geometric approach for computing dual
  distributions for the parameter distributions of multilinear
  models. The dual distributions can be computed from, for
  example, the parameter distributions of conics, multiple view
  tensors, homographies, or as simple entities as points, lines, and
  planes. The dual distributions have analytical forms that follow
  from the asymptotic normality property of the maximum likelihood
  estimator and an application of integral transforms, fundamentally the
  generalised Radon transforms, on the probability density of the
  parameters. The approach allows us, for instance, to look
  at the uncertainty distributions in feature distributions, which are
  essentially tied to the distribution of training data, and helps us
  to derive conditional distributions for interesting variables and
  characterise confidence intervals of the estimates.
\end{abstract}

\noindent Keywords: {Duality, Radon Transform, Uncertainty, Confidence Intervals, Integral Geometry, Computer Vision.}


\section{Introduction}



An essential part of geometric computer vision are multilinear models
which include
%
%
e.g.\@ homographies, multiple view tensors, quadric surfaces, as well
as the simple entities of points, lines and planes. 
There are 
%
numerous works related to the estimation of these kinds of multilinear
relations, see e.g. \cite{Hartley00, Kanatani96,  Fischler81, Brandt06f, Agarwal05, Torr97a, Cross98, Zhang98, Triggs00, Scoleri08}. 
%
When a statistical approach is selected for the estimation, one should be able to compute the geometric model parameters along with their uncertainty distribution.  
%
%
This paper considers how these parameter distributions of multilinear
geometric entities can be dualised. 
%
%
The simplest form of this dualisation is the transformation of the
line-probability-density into a point-probability-density as proposed in
\cite{Brandt08}. This paper generalises the dualisation approach for general
multilinear models. An early, conference version of this paper is \cite{Brandt09}.

By the way of an example, consider fitting a conic section to a set of
points using maximum likelihood estimation. We would be interested in
the confidence intervals of the conic, but the normal distribution
assumption can be made only for the MLE in the parameter
space. However, we will show that by the dualisation of the parameter
distribution we will obtain a distribution of points that can be used
to plot the selected confidence intervals for the estimated conic
section. As the second example, let us consider the trifocal point
transfer. Given a point match in two views, it would be interesting to
compute the conditional position distribution of the transferred point
in the third view if we had the uncertainty information of the
trifocal tensor available. In fact, by dualising the trifocal tensor
parameter distribution, an exact form for this conditional
distribution can be computed, as will be shown later in this paper.

Our approach is closely related to the
branch of
\emph{integral geometry} in mathematics. There are two main schools of
integral geometry of which the traditional is that of Santaló and
Blaschke \cite{Santalo53}. The classical example is that the length of
a plane curve is the probability of random lines intersecting it. The
more recent meaning is the school of Gelfand
\cite{Gelfand03}. It studies integral transforms, modelled with
the Radon transform, that relates the underlying geometrical incidence
relations by incidence graphs. Our approach seems to be somewhat in
between these two schools as we compute (generalised) Radon transforms for the probability densities in such a way that the
probability measure is preserved. The dualisation is constructed from 
the fact that the probability of an
element (e.g.\@ a point) is the total probability of all the geometric
entities (e.g.\@ planes) that coincide with the element. Then by
integrating the distribution of the entity over the affine subspaces
corresponding to the selected incidence relation, Radon like integral
transforms follow.


As the principal assumption we use the normal distribution assumption for the multilinear model parameters. This is reasonable due to
the asymptotic normality property
of the maximum likelihood estimator
, i.e., due to the fact that, with certain general regularity conditions, the
distribution of the maximum likelihood parameter estimator converges
in distribution to the normal distribution with the (pseudo)inverse of
the Fisher information as the parameter covariance matrix \cite{Karr92,Kanatani96}. 
This makes
the approach taken here fundamentally different from the work in \cite{Triggs01} where an
algebraic linear system and Gaussian approximation in the feature
space were used. 
In contrast to the work in \cite{Triggs01}, 
the dual distributions considered here have analytic forms
and are exact with the assumptions above.

%
%
This paper is organised as follows. In Section \ref{sec:multilinear}, we introduce the
multilinear models considered in this paper. In Section \ref{sec:duals}, we derive
the dual distributions by 
first assuming 
a single constraint equation and then
generalise the approach for multiple constraint equations. In Section
\ref{sec:featuremappings}, we show how
interesting conditional distributions can be extracted. 
In Section \ref{sec:examples}, 
%
%
we compute confidence intervals for conics and compute the point transfer density from two views into the third view. 
Conclusions are in Section~\ref{sec:conclusions}.

%

\vspace{-1mm}

\section{Multilinear Model} \label{sec:multilinear}


We first need to define what we mean by a multilinear model. 
\begin{definition}
	A function $f: V_1 \times \cdots \times V_k \rightarrow W$, where $V_1\ldots,V_k$ and $W$ are real vector spaces, is \emph{k-linear} if it is linear in each of its $k$ arguments:
\begin{equation}
f(\ldots,\alpha \mathbf{x}+\beta \mathbf{y},\ldots)=\alpha f(\ldots,\x,\ldots)+\beta f(\ldots,\y\ldots),
\end{equation}
for all $\alpha,\beta\in\R$ and $\mathbf{x},\mathbf{y} \in V$.
\end{definition}

\begin{definition}
Let $f$ be a $(n+1)$-linear function $f: \Pbb^m \times \ldots \times \Pbb^m \times \Pbb^{N-1} \rightarrow \mathbb{R}^L$.
The \emph{multilinear model} is defined as the relation
\begin{equation}
  f(\mathbf{x}_1,\x_2,\ldots,\mathbf{x}_n;\theta)=\mathbf{0},
\end{equation}
where $\x_i$ are \emph{feature vectors} and $\theta$ is the \emph{parameter vector}, where $\x_i \in \Pbb^m$, $\theta \in \Pbb^{N-1}$ and $i=1,2,\ldots,n$. 
\end{definition}


The definition is to be taken in the general sense so that 
any of the $n$ first arguments may be repeated arbitrary many times so that, e.g., quadratic forms are included.

With a fixed $\theta$, the multilinear model defines
a multilinear equation system, where each equation is
equivalent to a linear subspace, with co-dimension one, in the space of the joint feature vector.
\begin{definition}
  The \emph{joint feature vector} is defined as the vector $\y\in\Pbb^{N-1}$, such that
\begin{equation}
  \y \hat{=} \x_1 \otimes \x_2 \otimes \ldots \otimes \x_n
\end{equation}
containing the elements of the tensor product, up to scale, where the repeating elements have been dropped;
$\hat{=}$ denotes the correspondence between the expressions,
$\otimes$ is the tensor product, and $\x_i \in \Pbb^m$,
$i=1,2,\ldots,n$. 
\end{definition} 
For the case where the feature vectors are distinct, the mapping from the feature vectors to the joint feature vector is known as \emph{Segre embedding}. 


%

Without a loss of generality, 
we assume that 
each equation $l$ of the multilinear system is in the form of 
%
%
\begin{equation} \label{eq:genmodel}
  \theta^\T \mathbf{T}_l \y \equiv \theta^\T \y_l= 0,
\end{equation}
where $\mathbf{T}_l$ is a matrix defined by the multilinear relation. For instance, the well known point and line incidence relations \cite{Hartley00,Faugeras01} characterising multiple projective views of a scene can be written in this form.

\section{Statistical Model}
Let us assume that we have the maximum likelihood estimate $\theta_0$%
, or the corresponding robust estimate \cite{Brandt06f}, 
for the parameter vector and its the covariance matrix $\C_\theta$ available, where $\theta_0$ is constrained to lie on the unit hypersphere $\Sbb^{N-1}$ in $\R^N$. 
Using the asymptotic normality property of the MLE, we assume that
$\theta \sim N(\theta_0,\mathbf{C}_\theta)$. Since $\theta_0 \in \Ker\{\C_\theta\}$, $\theta$ has the density function  

\begin{equation}
p(\theta) \propto 
\exp\left( -\frac{1}{2} \theta^\T \mathbf{C}_\theta^{\dagger} \theta  \right),
\end{equation}
where $\cdot^\dagger$ is the Moore--Penrose pseudoinverse. 
That is, we use the tangential, normal approximation for the variation of $\theta$ on the unit hypersphere at $\theta_0$.

\section{Duals of the Parameter Distributions} \label{sec:duals}

 In this section, we dualise the normal parameter uncertainty distributions starting from 
the models defined by a single constraint
equation and generalise the point--line case \cite{Brandt08} to the point--hyperplane duality (Section~\ref{sec:single}). The general case of multiple
constraint equations is considered in Section~\ref{sec:multiple}.

\subsection{Single Constraint Equation} \label{sec:single}

Let $\theta$ be a parameter vector that defines the multilinear model by a single constraint equation 
\begin{equation} \label{eq:model}
  \theta^\T \y = 0,
\end{equation}
with the joint feature vector $\mathbf{y}\in \Pbb^{N-1}$. 
%
%


\begin{figure}[t]
  \begin{center}
  \includegraphics[width=0.75\columnwidth]{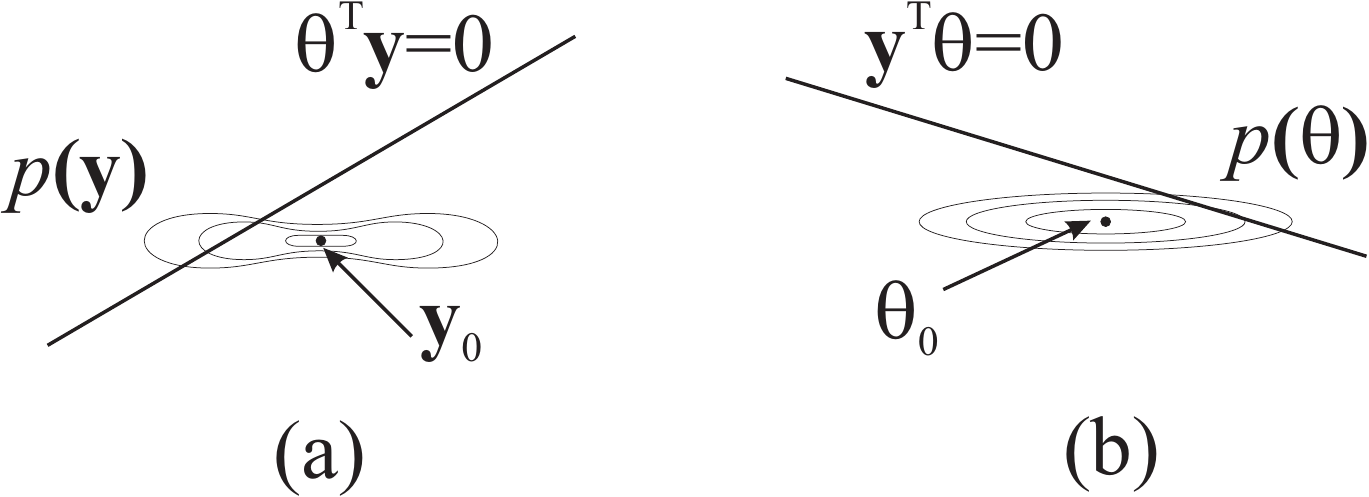}
  \end{center}
  \caption{Illustration of the duality. (a) The space of $\y$ is illustrated as the 2D plane. The parameter $\theta$ here represents the line $\theta^\T\y=0$. (b) The dual space or the space of $\theta$. In the dual space, $\y$ represents the line $\y^\T \theta=0$. The dual distribution $p(\y)$ is constructed from $p(\theta)$ by making Radon like integral transform to it so that the probability measure is preserved.
} \label{fig:duality} \vspace{-2mm}
\end{figure}

Our intention is to compute the dual of $p(\theta)$. The duality
between the parameter vector $\theta$ and the feature vector $\y$ 
is illustrated in Fig.~\ref{fig:duality}. As (\ref{eq:model})
represents the hyperplane $\theta$ in the space of $\y$, it can also
be seen as the hyperplane $\y$ in the dual space or space of
$\theta$. Now, as the dual distribution of $\theta$, which is a
distribution for $\y$, we will identify
\emph{the total probability (density) of the all the models $\theta$ lying on
the hyperplane $\y$ in the dual space}. To construct the dual pdf
$p(\y)$, we first make the whitening transformation for
variables. Then we compute the Radon transform, i.e., transform the
whitened dual domain by computing the integrals of $p(\theta)$ over
all the hyperplanes in the dual space. The subsets of these integrals
corresponding to parallel hyperplanes form the conditional probability
density (marginal density) conditioned on a fixed normal direction of
the hyperplane. Multiplying this conditional density with the pdf of
normal directions, which is the uniform density on the half of the unit
hypersphere with the whitened Gaussian model, we obtain a valid
probability density which can be interpreted in the parameter space of
$\y$. The details 
are below.

%

\begin{degenerate1}

Let us make the whitening transform from the eigenvalue decomposition of $\C_\theta$, which has been constructed so that $\theta_0$ corresponds to the last eigenvector. 
We may write 
%
%
%
\begin{equation}
\begin{split}
 \theta^\T \C_\theta^\dagger\theta&=
\theta^\T \U
\La^\dagger
\U^\T \theta\\
&=
\underset{\triangleq\theta'^\T}{\underbrace{
\theta^\T \tilde{\U}
\begin{pmatrix}
  \tilde{\La}^{-1/2} & \mathbf{0}\\
  \mathbf{0}^\T & 1 
\end{pmatrix}}}
\underset{\triangleq \tilde{\I}}{\underbrace{
\begin{pmatrix}
\I & \mathbf{0}\\
\mathbf{0}^\T & 0
\end{pmatrix}}}
\underset{\triangleq\theta'}{\underbrace{
\begin{pmatrix}
  \tilde{\La}^{-1/2} & \mathbf{0}\\
  \mathbf{0}^\T & 1
\end{pmatrix} \tilde{\U}^\T \theta}}\\
&=\theta'^\T \tilde{\I} \theta', 
\end{split}
\end{equation}
where the diagonal matrix $\tilde{\La}$ contains the $M-1$ non-zero eigenvalues of $\mathbf{C}_\theta$, sorted in the descending order, and $\tilde{\U}$ contains the corresponding eigenvectors and the eigenvector representing $\theta_0$. Now, $\theta' \sim N(\e_{M},\tilde{\I})$, where $\e_{M}$ is the standard basis vector $(0,\ldots,0,1) \in \mathbb{R}^{M}$. Furthermore,
%
for all the feature vectors that are consistent with the model $\theta$, lying on the tangent space, we may write 
\begin{equation} \label{eq:hyperplane}
\begin{split}
 0 &= \theta^\T \y 
= \underset{\theta'^\T}{\underbrace{
\theta^\T \tilde{\U}
\begin{pmatrix}
  \tilde{\La}^{-1/2} & \mathbf{0}\\
  \mathbf{0}^\T & 1 
\end{pmatrix}}}
\underset{\triangleq\y'}{\underbrace{
\begin{pmatrix}
  \tilde{\La}^{1/2} & \mathbf{0}\\
  \mathbf{0}^\T & 1
\end{pmatrix} \tilde{\U}^\T \y}}
= \theta'^\T \y', 
\end{split}
\end{equation}
where $\y'\in \Pbb^{M-1}$ represents a \emph{reduced joint feature vector}. 
%
%
%
For simplicity, we now investigate the reduced, transformed model $\theta'$.  
Now we are ready to state our main theorem for the case of the single constraint equation.

\begin{theorem} {\it
   Let $\y,\theta \in \mathbb{P}^{N-1}$, normalised so that the joint feature vector $\y$ has
   homogeneous scaling of unity and $\theta$ lies in the tangent space
   $T_{\theta_0}(\mathbb{S}^{N-1})$. Moreover, let
   $\theta \sim N(\theta_0,\C)$ with the probability density function $p(\theta)$, where the eigenvectors of $\C$
   corresponding to the $M-1$ non-zero eigenvalues span $T_{\theta_0}$, and $\y'$ is the reduced joint feature vector corresponding to $\y$. The dual distribution of $p(\theta)$ has the analytic form 
\begin{equation}
\begin{split}
p(\rho,\phi)&=
\frac{\Gamma(M/2-1/2) \mathrm{e}^{-\frac{1}{2}\rho^{-2}}\prod_{i=1}^{M-3} \sin^{M-2-i}(\phi_i)}{\sqrt{2\pi^{M}}\rho^2},
\end{split}
\end{equation}
where the reduced joint feature vector $\y'$ is parameterized by the modified hyperspherical coordinates $(\rho,\phi)$ in $\R^{M-1}$.
}
\end{theorem}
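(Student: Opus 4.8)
The plan is to carry the whitened model forward and read off the dual density as a Radon transform of an isotropic Gaussian, then push it through a coordinate change to the reduced feature vector. After the whitening already performed, the first $M-1$ coordinates of $\theta'$, which I denote $\bar{\theta}'$, are distributed as the standard isotropic Gaussian $N(\mathbf{0},\I_{M-1})$ while the last coordinate is pinned to $1$, so the constraint $\theta'^\T\y'=0$ reads $\mathbf{n}^\T\bar{\theta}'+y'_M=0$ with $\mathbf{n}=(y'_1,\ldots,y'_{M-1})^\T$. Thus every reduced feature vector $\y'$ corresponds to an affine hyperplane in $\R^{M-1}$ with unit normal $\uu=\mathbf{n}/\|\mathbf{n}\|$ and signed distance $s=-y'_M/\|\mathbf{n}\|$ from the origin, and the dual density at $\y'$ is to be the total probability mass that the whitened model places on that hyperplane.

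First I would compute the Radon transform of $p(\theta')$ over hyperplanes. Because the Gaussian is isotropic, integrating it over $\{\uu^\T\x=s\}$ collapses, by rotational invariance, to the one-dimensional marginal in the normal direction, giving the direction-independent value $\tfrac{1}{\sqrt{2\pi}}\mathrm{e}^{-s^2/2}$. For each fixed $\uu$ this is already normalised in $s$ over $\R$, so it serves as the conditional density of the offset given the direction. The remaining ingredient is the density of normal directions: by isotropy the directions are uniform, and since $(\uu,s)$ and $(-\uu,-s)$ describe the same hyperplane, they range over half of $\Sbb^{M-2}$ carrying the uniform density $\Gamma((M-1)/2)/\pi^{(M-1)/2}$, the reciprocal of half the surface area of $\Sbb^{M-2}$. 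Multiplying the conditional density by this directional density yields a joint density on (direction, offset) that integrates to one.

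The final step is the change of variables to the modified hyperspherical coordinates $(\rho,\phi)$ of $\y'$. With the homogeneous scaling normalised to unity I would take $|y'_M|=1$, so that $\rho\triangleq\|\mathbf{n}\|$ is the radial coordinate with $s=1/\rho$, while the angles $\phi=(\phi_1,\ldots,\phi_{M-2})$ parametrise $\uu$ on $\Sbb^{M-2}$. The substitution $s=1/\rho$ contributes the Jacobian factor $|\mathrm{d}s/\mathrm{d}\rho|=\rho^{-2}$ and turns $\mathrm{e}^{-s^2/2}$ into $\mathrm{e}^{-\rho^{-2}/2}$, while the surface element of $\Sbb^{M-2}$ supplies $\prod_{i=1}^{M-3}\sin^{M-2-i}(\phi_i)$. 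Collecting the constants through $\tfrac{1}{\sqrt{2\pi}}\cdot\tfrac{\Gamma((M-1)/2)}{\pi^{(M-1)/2}}=\tfrac{\Gamma(M/2-1/2)}{\sqrt{2\pi^{M}}}$ reproduces the stated expression, and I would close by verifying $\int p(\rho,\phi)\,\mathrm{d}\rho\,\mathrm{d}\phi=1$ using $\int_0^\infty\mathrm{e}^{-\rho^{-2}/2}\rho^{-2}\,\mathrm{d}\rho=\sqrt{\pi/2}$ together with the surface area of $\Sbb^{M-2}$.

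The main obstacle I anticipate is the measure-theoretic bookkeeping rather than any single hard integral: one must justify that the product of the conditional offset density and the uniform directional density is exactly the probability-preserving dual measure, and reconcile the projective identifications — the half-sphere of directions with $s$ ranging over $\R$ against the range $\rho>0$ with $\phi$ over all of $\Sbb^{M-2}$ — so that each hyperplane is counted once and the Jacobian of the modified hyperspherical chart is applied consistently. Once that identification is pinned down, the normalisation check confirms the constant and the remaining steps are routine.
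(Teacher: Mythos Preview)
Your proposal is correct and follows essentially the same route as the paper: compute the conditional hyperplane-offset density as the one-dimensional marginal of the whitened isotropic Gaussian, multiply by the uniform directional density on half of $\Sbb^{M-2}$, and then push forward via $s=-1/\rho$ (your $s=1/\rho$ differs only by a harmless sign convention) together with the modified spherical-coordinate volume element to obtain $p(\rho,\phi)=p(\rho|\phi)p(\phi)$. The measure-theoretic bookkeeping you flag is exactly what the paper handles informally by its choice of signed $\rho$ and half-sphere of directions, so no additional idea is needed.
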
 

\begin{proof}
On the basis of the construction above, the variation of $\theta'$ occurs
only in the $M-1$ dimensional affine subspace $\pi'$ perpendicular to
$\e_M$ in the dual space.
Since every point on this tangent hyperplane can be identified with a
unique one-dimensional linear subspace of $\R^M$, we may regard the
tangent hyperplane as a projective space $\Pbb^{M-1}$. Moreover, the
points on the tangent plane can be considered to be already in the
homogeneous form. Hence, in the following we assume that $\theta' \in
\Pbb^{M-1}$.

\begin{figure}
  \begin{center}
  \includegraphics[width=0.75\columnwidth]{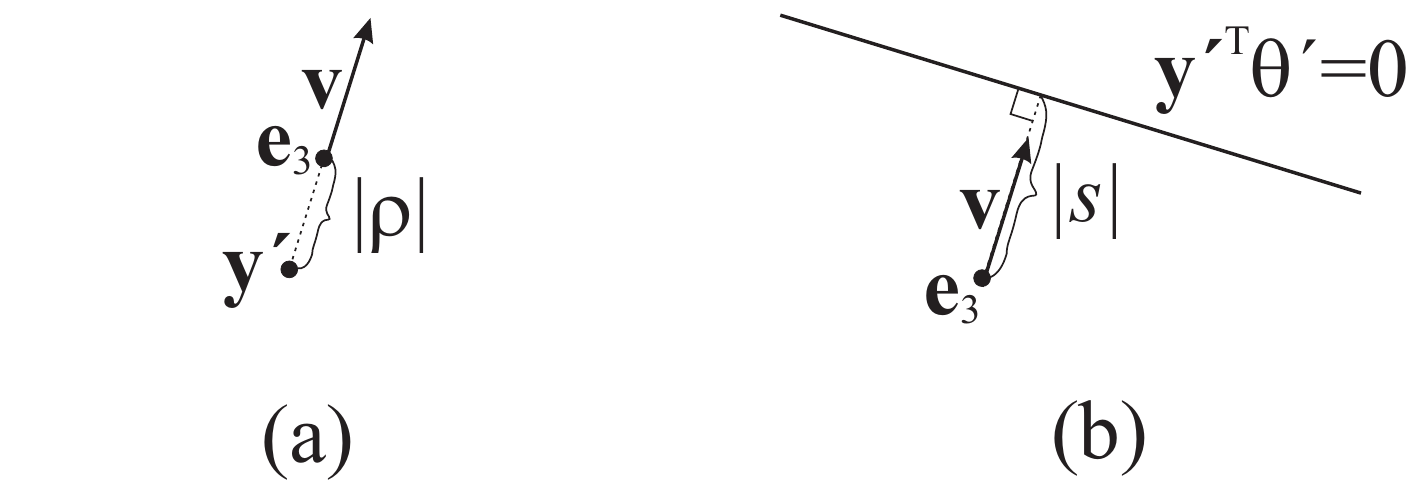} \vspace{-1.5mm}
  \end{center}  
  \caption{The hyperplane parameterisation when $M=3$, thus, the hyperplane $\y'$ is a line and the tangent affine subspace $\pi'$ is the real plane. (a) Reduced joint feature space, where $\y'$, represented by the homogeneous vector $(\rho \mathbf{v}^\T\ 1)^\T$, is a point; (b) the corresponding dual space, where $\y'$ is the line. 
} \label{fig:dualplane}
\end{figure}

In the dual space, the $M-2$ dimensional hyperplanes $\y'^\T\theta'=0$
embedded in the $M-1$ dimensional affine subspace $\pi'\hat{=}\mathbb{R}^{M-1}$, may be parameterised by the signed distance $s$ from the
origin and by the unit vector $\mathbf{v} \in \Sbb^{M-2}$, as Fig~\ref{fig:dualplane}(b) illustrates. The origin can be
represented by $\e_M\in \mathbb{P}^{M-1}$ 
and we assume that $\mathbf{v}=\mathbf{v}(\phi)$ where
$\phi$ parameterises 
the normal direction of the $M-2$ dimensional hyperplane $\y'$ in 
$\mathbb{R}^{M-1}$.  We choose the sign
of $s$ to be equal to sign of the intercept of the hyperplane in the
dual space, hence, 
\begin{equation}
s= -\frac{y'_N}{ \mathrm{sign}\ ( y'_{N-1} ) \sqrt{\y'^\T \y' - {y'}_N^2}}.
\end{equation}
On the other hand, in the reduced joint feature space, the homogeneous vector $\y'$ can be parameterised
by the signed distance $\rho$ from the origin $\e_M$ and the direction $\mathbf{v}$ (Fig~\ref{fig:dualplane}(a)).
 The sign of $\rho$ is identified as the last variable sign of the inhomogeneous representation of $\y'$. 
Then we have 
%
\begin{equation}
  s=-\frac{1}{\rho}.
\end{equation}

As we assume that a $\theta' \sim N(\e_M,\tilde{\I})$, the probability of the hyper plane ${\y'_s}^\T \theta'=0$, conditioned on the direction $\mathbf{v}(\phi)$ in the dual space, is simply the marginal probability of the Gaussian over the hyperplane, i.e., 
\begin{equation} \label{eq:1Dgaussian}
  p(s|\phi)=\int_{{\y'_s}^\T \theta'=0} p_\mathrm{G} (\theta';\e_M,\tilde{\I}) \mathrm{d} S = p(s)
\end{equation}
which is a mean zero, 1-D Gaussian with unity variance \cite{Brandt08}, where $\mathrm{d} S $ denotes the volume differential in the hyperplane. 

By using the modified spherical coordinates 
$\y'=\y'(\rho,\phi)$ in the reduced joint feature space 
\begin{automatic}
(see Appendix~\ref{app:ndim}) 
\end{automatic}
\begin{manual}
(see Appendix~A) 
\end{manual}
and the fact that $s=s(\rho)$, 
we get  
%
\begin{equation}
\begin{split}
  p(\rho|\phi)&=\left|\frac{\partial s}{\partial \rho}\right|p\left(s(\rho)|\phi \right)
=\frac{1}{\sqrt{2\pi}\rho^2} \exp\left( -\frac{1}{2} \rho^{-2}  \right).
\end{split}
\end{equation}

On the other hand, since we have an isotropic Gaussian distribution, the distribution of normal directions is uniform over a half of the unit sphere. By using the modified spherical coordinates, and marginalising the $M-1$-dimensional Gaussian over the signed radial parameter, we obtain 
\begin{equation}
\begin{split}
  p(\phi)
=\frac{\Gamma\left(M/2-1/2\right)}{\pi^{(M-1)/2}}\prod_{i=1}^{M-3} \sin^{M-2-i}(\phi_i).
\end{split}
\end{equation}
%
%
Hence, 
\begin{equation} \label{eq:dualdensity}
\begin{split}
p(\rho,\phi)&=p(\rho|\phi)p(\phi),
\end{split}
\end{equation}
and the claim follows. 
\end{proof}

We have thus derived an analytic form for the probability density of the reduced joint feature vector $\y'=\y'(\rho,\phi)$ 
assuming the model (\ref{eq:model}) and the Gaussian distribution for the parameter vector $\theta$.


%
%

%


\end{degenerate1}

\subsection{Multiple Constraint Equations} \label{sec:multiple}

Now we generalise the computations above for multiple constraint equations.
Let $\theta$ be a parameter vector of a multilinear model. We define that the set vectors $\y_l,\ l=1,\ldots,L$ are consistent with the model if and only if  
%
\begin{equation}
\theta^\T \y_l =0, \quad l=1,\ldots,L. \label{eq:model2}
\end{equation}
Hence, $(\y_1\ \dots\ \y_L)^\T \theta = \Y^\T \theta = \mathbf{0}$, i.e., $\theta \in \Ker\{\Y^\T\}$ (c.f. the nullspace representation of lines in $\R^3$). Similarly as in (\ref{eq:hyperplane}), we define the transformed model $\theta'$ and the reduced coefficient matrix $\Y'$ so that 
%
\begin{equation} \label{eq:hyperplane2}
\begin{split}
 \mathbf{0} &= \theta^\T \Y 
= \underset{\theta'^\T}{\underbrace{
\theta^\T \tilde{\U}
\begin{pmatrix}
  \tilde{\La}^{-1/2} & \mathbf{0}\\
  \mathbf{0}^\T & 1 
\end{pmatrix}}}
\underset{\triangleq\Y'}{\underbrace{
\begin{pmatrix}
  \tilde{\La}^{1/2} & \mathbf{0}\\
  \mathbf{0}^\T & 1
\end{pmatrix} \tilde{\U}^\T \Y}}\\
&
= \theta'^\T \Y'.
\end{split}
\end{equation}
Let $W$ be the affine subspace corresponding to the set $\{ \theta' \in \Pbb^{M-1} | \Y'^\T \theta' =0\}$ or $W=\{\tilde{\theta}' \in \R^{M-1} | \A \tilde{\theta}' = \bb\}$, where $\theta'\hat{=}(\tilde{\theta}',1)$ and ${\Y'}^\T=(\A\ \ -\bb)$.
%
%
%
%
Assuming that the $W$ contains finite points, it is easy to show that the nearest point 
to the origin $\e_M$ is 
%
\begin{equation}
\theta'_\mathrm{min}\triangleq
\begin{pmatrix} \mathbf{w}\\
1
\end{pmatrix}
=
  \begin{pmatrix} \A^\dagger \bb\\
1
\end{pmatrix}.
\end{equation}
In other words, the affine subspace $W=\mathbf{w} + V$ where $V$ denotes the linear subspace parallel to $W$ and $\dim (V) \equiv K$.

We are interested in the probability density of $\Ker\{\Y'^\T\}$, that
is, \emph{the total probability (density) of the affine subspace $W$
as it represents the set of all the models $\theta'$ that
are consistent with $\Y'$}. To construct this dual density, instead of
integrating over all hyperplanes as in the previous subsection, we
must generalise the Radon transform by integrating $p(\theta')$ over
all the $K$-dimensional affine subspaces.  To perform
this kind of integral transform and to interpret its result as probability densities, we must create a unique
parameterisation for affine subspaces. 
A unique parameterisation could be constructed by the Grassmannian coordinates
\cite{Triggs95,Triggs01}---it could give certain algebraic benefits that are to be investigated in future. 

In this paper, we study Gaussians in the reduced coordinate frame that suggests a simple parameterisation, generalising the result on the single coordinate equation. 
%
We
represent an affine subspace by the offset vector $\mathbf{w}$ and parameters of the parallel linear subspace
$V$. To parameterise $V$, we use the fact that the related
\emph{orthogonal projection matrix} 
\begin{equation}
  \proj=\I-\A^\dagger \A, 
\end{equation} 
projecting onto the subspace is unique. It is well known that a matrix
represents an orthogonal projection onto a linear subspace if and
only if it is \emph{idempotent}, $\proj^2=\proj$, and
\emph{self-adjoint}, $\proj^\T=\proj$. 
Now, to uniquely parameterise the matrix $\proj$ we need the following lemma.
\begin{lemma} {\it
  Let $\proj$ be an idempotent and symmetric $(M-1)\times (M-1)$
  matrix. If $\mathbf{p}_i=\proj \e_i$,
  $i=1,\ldots,K$ are linearly independent and span the range of
  $\proj$, where $K\leq \frac{M}{2}$, is the dimension of the range, then
  there is a unique lower triangular $K\times(M-1)$ matrix $\Lo$ with
  orthonormal columns and strictly positive diagonal so that $\proj = \Lo \Lo^\T$. 
}
\end{lemma}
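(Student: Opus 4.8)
The plan is to reduce the statement to the uniqueness of a Cholesky factor. Because $\proj$ is symmetric and idempotent of rank $K$, it is the orthogonal projection onto $V=\Ran\proj$, and any factorisation $\proj=\Lo\Lo^\T$ with $\Lo^\T\Lo=\I$ merely records an orthonormal basis of $V$ in the columns of $\Lo$. I read $\Lo$ as the $(M-1)\times K$ matrix whose $K$ orthonormal columns span $V$, ``lower triangular'' meaning that its entries above the main diagonal vanish, so that $\proj=\Lo\Lo^\T$ has the correct size $(M-1)\times(M-1)$. Writing $\Lo=\begin{pmatrix}\Lo_1\\ \Lo_2\end{pmatrix}$ with $\Lo_1$ the top $K\times K$ block, triangularity of $\Lo$ means exactly that $\Lo_1$ is lower triangular with positive diagonal. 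Partitioning $\proj$ conformally into blocks $\proj_{11},\proj_{12},\proj_{21},\proj_{22}$, the identity $\proj=\Lo\Lo^\T$ splits into $\proj_{11}=\Lo_1\Lo_1^\T$, $\proj_{21}=\Lo_2\Lo_1^\T$, and $\proj_{22}=\Lo_2\Lo_2^\T$. The first of these is precisely a Cholesky factorisation, and this is the observation I would build the whole argument on.

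First I would establish that $\proj_{11}$ is positive definite, which is where the hypothesis on $\mathbf{p}_1,\dots,\mathbf{p}_K$ enters. Collecting the first $K$ standard basis vectors into $\mathbf{E}=(\e_1\ \cdots\ \e_K)$, one has $\proj_{11}=\mathbf{E}^\T\proj\mathbf{E}$ and, using $\proj=\proj^\T=\proj^2$ (so $\proj=\proj^\T\proj$), the quadratic form $\x^\T\proj_{11}\x=\|\proj\mathbf{E}\x\|^2$ for every $\x\in\R^K$. Since the columns of $\proj\mathbf{E}$ are exactly $\mathbf{p}_1,\dots,\mathbf{p}_K$, which are assumed linearly independent, this vanishes only at $\x=\mathbf{0}$; hence $\proj_{11}\succ 0$. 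The classical Cholesky theorem then supplies a unique lower triangular $\Lo_1$ with strictly positive diagonal satisfying $\proj_{11}=\Lo_1\Lo_1^\T$, and with $\Lo_1$ fixed the relation $\proj_{21}=\Lo_2\Lo_1^\T$ forces $\Lo_2=\proj_{21}\Lo_1^{-\T}$ uniquely. This already settles uniqueness; what remains is to check that the $\Lo$ so constructed is genuinely admissible.

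The substantive step, and the one I expect to be the main obstacle, is verifying that this $\Lo$ reproduces all of $\proj$ (i.e.\ $\Lo_2\Lo_2^\T=\proj_{22}$) and has orthonormal columns; the triangular data alone cannot see the rank of $\proj$, so idempotency must be used here. I would exploit that $V=\Ran\proj$ is $K$-dimensional and, by hypothesis, spanned by the first $K$ columns of $\proj$. Consequently every column of $\proj$ lies in the column space of $\begin{pmatrix}\proj_{11}\\ \proj_{21}\end{pmatrix}$, so $\begin{pmatrix}\proj_{12}\\ \proj_{22}\end{pmatrix}=\begin{pmatrix}\proj_{11}\\ \proj_{21}\end{pmatrix}\mathbf{N}$ for some $\mathbf{N}$; the top block gives $\mathbf{N}=\proj_{11}^{-1}\proj_{12}$ and the bottom block yields the Schur-type identity $\proj_{22}=\proj_{21}\proj_{11}^{-1}\proj_{12}$. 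From this, $\Lo_2\Lo_2^\T=\proj_{21}\Lo_1^{-\T}\Lo_1^{-1}\proj_{12}=\proj_{21}(\Lo_1\Lo_1^\T)^{-1}\proj_{12}=\proj_{21}\proj_{11}^{-1}\proj_{12}=\proj_{22}$, using $\proj_{21}^\T=\proj_{12}$, so indeed $\Lo\Lo^\T=\proj$.

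Finally, orthonormality follows cheaply once $\Lo\Lo^\T=\proj$ is known. Since $\Lo_1$ is invertible, $\Lo$ has full column rank $K$, so $\mathbf{G}\triangleq\Lo^\T\Lo$ is an invertible $K\times K$ matrix; applying $\proj^2=\proj$ to $\Lo\Lo^\T$ gives $\Lo\mathbf{G}\Lo^\T=\Lo\Lo^\T$, and multiplying on the left by $\Lo^\T$ and on the right by $\Lo$ collapses this to $\mathbf{G}^3=\mathbf{G}^2$, whence $\mathbf{G}=\I$. This proves $\Lo^\T\Lo=\I$ and completes the existence half. I would note that the bound $K\le M/2$ does not appear load-bearing for existence or uniqueness as such—the essential inputs are symmetry, idempotency, and the independence and spanning of the first $K$ columns—so I would treat it as a contextual normalisation (guaranteeing ample ambient room and that the first $K$ columns form a legitimate basis) rather than an assumption the argument actually consumes.
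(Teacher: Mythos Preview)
Your argument is correct and takes a genuinely different route from the paper. The paper constructs $\Lo$ by applying Gram--Schmidt to $\mathbf{p}_1,\dots,\mathbf{p}_K$, then argues triangularity a posteriori via a self-referential trick (expanding each $\mathbf{u}_l=\proj\e_l/\|\cdot\|$ using $\proj=\Lo\Lo^\T$ and reading off $u_{lk}=0$ for $k>l$ and $u_{ll}>0$); uniqueness is shown by assuming two candidate factors and comparing column by column. You instead block-partition, reduce the top $K\times K$ block to the classical Cholesky theorem (which immediately handles uniqueness), recover $\Lo_2$ from $\proj_{21}$, use the rank-$K$ spanning hypothesis to get the Schur identity $\proj_{22}=\proj_{21}\proj_{11}^{-1}\proj_{12}$, and finally extract $\Lo^\T\Lo=\I$ from idempotency. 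Your reduction is cleaner and makes uniqueness automatic, while the paper's Gram--Schmidt construction is the one actually reused later in the paper to build the parameterisation of $V$ and $V^\perp$. Your reading of the dimensions ($\Lo$ as $(M-1)\times K$, so that $\Lo\Lo^\T$ is $(M-1)\times(M-1)$) matches the paper's own proof, and your remark that $K\le M/2$ is not consumed by the lemma itself is accurate---the paper uses that bound only downstream to decide whether to factor $\proj$ or $\I-\proj$.
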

The proof is in 
\begin{automatic}
Appendix~\ref{sec:lemmaproof}.
\end{automatic}
\begin{manual}
Appendix~B.
\end{manual}

%
%
%
%
The lemma suggests that we may parameterise the orthogonal projection matrix $\proj$ by parameterising the elements of $\Lo=\Lo(\Phi)$ 
when we may define 
\begin{equation}
  \proj(\Phi)=\left\{ 
\begin{tabular}{ll} $\Lo(\Phi)\Lo(\Phi)^\T$, \quad & if $K \leq M/2$ \\
                  $\mathbf{I}-\Lo(\Phi)\Lo(\Phi)^\T$, \quad & otherwise.
\end{tabular}\right.
\end{equation}
The matrix $\Lo$ can be parameterised 
by the modified 
spherical coordinates 
\begin{automatic}
(see Appendix~\ref{app:ndim}) 
\end{automatic}
\begin{manual}
(see Appendix~A) 
\end{manual}
with the radial parameter equal to unity. 
The first column vector is on the half of the unit sphere $\Sbb^{M-2}$, the second column vector is orthogonal to the
first and it has one element less and is hence 
on $\Sbb^{M-4}$. To parameterise $\Sbb^{M-4}$, we form an orthogonal
basis in the orthogonal complement of the first column vector, by
creating the orthogonal projection matrix that projects onto the
orthogonal complement, and employ the Gram--Schmidt
orthonormalisation procedure, as shown in Appendix~B, 
and form the unit sphere in the subspace spanned by the orthonormal vectors. The other columns $l \leq \tilde{K}$ can be similarly parameterised
on the half of the unit sphere $\Sbb^{M-2l}$, whereas $\tilde{K}=K$, if $K \leq M/2$, and $\tilde{K}=M-K-1$, otherwise. Let $\Phi_k$ denote the
vector of modified spherical coordinates of the column $k$ in $\Lo$. We
collect the parameters in the vector
$\Phi=(\Phi_1,\Phi_2,\ldots,\Phi_{\tilde{K}})$. In total, there are
$K'=(M-1)\tilde{K}-\tilde{K}^2$ free parameters in $\Lo$.

Now, we are ready to state our theorem about the probability density for the affine subspace $W$. 
\begin{theorem}
{\it
   Let $\Y\in\mathbb{R}^{N \times L}$ and $\theta \in \mathbb{P}^{N-1}$, normalised so that $\theta$ lies in the tangent space
   $T_{\theta_0}(\mathbb{S}^{N-1})$. Moreover, let
   $\theta \sim N(\theta_0,\C)$ with the probability density function $p(\theta)$, where the eigenvectors of $\C$
   corresponding to the $M-1$ non-zero eigenvalues span $T_{\theta_0}$, and $\Y'$ is the reduced matrix corresponding to $\Y$. The dual distribution of $p(\theta)$, corresponding to the affine subspace $W\hat{=}\Ker\{\Y'^\T\}$, has the analytic form 
\begin{equation}
\begin{split}
p(\mathbf{s},\Phi)
&
= \frac{\mathrm{e}^{-\frac{1}{2}\mathbf{s}^\T\mathbf{s}}}{2^{(M-K-1)/2}\pi^{(M-K+\tilde{K}M-\tilde{K}^2-1)/2}} \\ & 
\times \prod_{k=1}^{\tilde{K}}\Gamma\left((M-2k+1)/2\right)\prod_{i=1}^{M-2k-1} \sin^{M-2k-i}(\phi_i^k).
\end{split}
\end{equation}
where $(\mathbf{s},\Phi)$ are the parameters of the affine subspace $W$. 
}
\end{theorem}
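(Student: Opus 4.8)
The plan is to mirror the single-constraint proof, replacing the integration over hyperplanes by a generalised Radon transform over the $K$-dimensional affine subspaces, and to separate the parameters $(\mathbf{s},\Phi)$ into the perpendicular offset $\mathbf{s}$ of $W$ and the orientation $\Phi$ of the parallel subspace $V$. As before, I would work in the reduced frame where $\theta'\sim N(\e_M,\tilde{\I})$; in the affine chart $\theta'\hat{=}(\tilde{\theta}',1)$ this is the standard isotropic Gaussian on $\R^{M-1}$. The dual density attached to $W$ is the integral of this Gaussian over $W$, and the first task is to show that this transform factorises as $p(\mathbf{s},\Phi)=p(\mathbf{s}\mid\Phi)\,p(\Phi)$.

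For the conditional density given the orientation, I would use that $\mathbf{w}=\A^\dagger\bb\in\Ran(\A^\T)=V^\perp$, so the offset is orthogonal to $V$. Writing a point of $W$ as $\mathbf{w}+\mathbf{u}$ with $\mathbf{u}\in V$ gives $\lvert\mathbf{w}+\mathbf{u}\rvert^2=\lvert\mathbf{w}\rvert^2+\lvert\mathbf{u}\rvert^2$, so the Gaussian integral over $V$ separates. Taking $\mathbf{s}$ to be the coordinates of $\mathbf{w}$ in an orthonormal basis of $V^\perp$, so that $\lvert\mathbf{w}\rvert^2=\mathbf{s}^\T\mathbf{s}$, I obtain
\begin{equation}
p(\mathbf{s}\mid\Phi)=(2\pi)^{-(M-1-K)/2}\,\mathrm{e}^{-\frac{1}{2}\mathbf{s}^\T\mathbf{s}},
\end{equation}
an $(M-1-K)$-dimensional standard Gaussian that already contributes the factor $2^{-(M-K-1)/2}\pi^{-(M-K-1)/2}\mathrm{e}^{-\frac{1}{2}\mathbf{s}^\T\mathbf{s}}$.

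For the orientation marginal $p(\Phi)$, I would invoke the rotational invariance of the isotropic Gaussian: $V$, equivalently the projection $\proj$, is uniformly distributed on the Grassmannian. Through the parameterisation $\proj=\Lo(\Phi)\Lo(\Phi)^\T$ of the Lemma the columns are built sequentially, and the strictly positive diagonal fixes each column's sign so that, in the orthogonal complement of the preceding columns, column $k$ ranges over the half-sphere $\Sbb^{M-2k}$. Exactly as in the single-constraint case, marginalising an isotropic Gaussian of ambient dimension $M-2k+1$ over its radial parameter and doubling for the half-sphere yields
\begin{equation}
p(\Phi_k)=\frac{\Gamma((M-2k+1)/2)}{\pi^{(M-2k+1)/2}}\prod_{i=1}^{M-2k-1}\sin^{M-2k-i}(\phi_i^k),
\end{equation}
and, the columns being independent under this construction, $p(\Phi)=\prod_{k=1}^{\tilde{K}}p(\Phi_k)$.

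Assembling $p(\mathbf{s},\Phi)=p(\mathbf{s}\mid\Phi)\,p(\Phi)$ and collecting the powers of $\pi$ via $\sum_{k=1}^{\tilde{K}}(M-2k+1)=\tilde{K}(M-\tilde{K})$, together with $K'=\tilde{K}(M-1-\tilde{K})$, reproduces the exponent $(M-K+\tilde{K}M-\tilde{K}^2-1)/2$ and hence the claimed form; the cases $K\leq M/2$ and $K>M/2$ are symmetric under $\proj\mapsto\I-\proj$, so one parameterises whichever of $V,V^\perp$ has the smaller dimension, which is why $\tilde{K}$ rather than $K$ enters. The main obstacle is the orientation step: I must verify that $\Phi\mapsto\proj(\Phi)$ pushes the $O(M-1)$-invariant measure on the Grassmannian forward to the product of independent uniform half-sphere measures, i.e. that the Jacobian of the triangular $\Lo$-parameterisation factorises columnwise and that fixing the lower-triangular gauge---quotienting the within-$V$ rotations $O(K)$---leaves each column uniform on the nested sphere $\Sbb^{M-2k}$ whose dimension drops by two at each step.
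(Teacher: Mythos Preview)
Your proposal is correct and follows essentially the same route as the paper: work in the whitened frame, integrate the isotropic Gaussian over $W=\mathbf{w}+V$ to get the standard $(M-K-1)$-dimensional Gaussian $p(\mathbf{s}\mid\Phi)$, then use rotational invariance to obtain $p(\Phi)$ as a product of half-sphere densities via the triangular $\Lo$-parameterisation, and finally multiply. The only cosmetic difference is that the paper writes the orientation density as a chain of conditionals $p(\Phi_1)p(\Phi_2\mid\Phi_1)\cdots p(\Phi_{\tilde K}\mid\Phi_1,\ldots,\Phi_{\tilde K-1})$ rather than a product of independents; since each conditional has a form depending only on the ambient dimension $M-2k+1$ and not on the actual values of the earlier $\Phi_j$, the two formulations coincide, and your honest remark about the Jacobian of the $\Lo$-parameterisation flags a step that the paper likewise leaves at the level of ``isotropic Gaussians imply uniform directions on half hyperspheres.''
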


%

\begin{proof}
We create an orthogonal basis $\mathbf{u}_1,\mathbf{u_2},\ldots,\mathbf{u}_{M-1}$ for the subspace $V$ and its orthogonal complement $V^\perp$ using the Gram-Schmidt orthonormalisation procedure for the projections $\mathbf{p}_i=\proj \mathbf{e}_i$, $i=1,2,\ldots,K$ onto the subspace $V$ (see Appendix~B) 
and similarly for the projections $\mathbf{p}_i=(\I-\proj) \mathbf{e}_i$, $i=K+1,K+2,\ldots,M-1$ on its orthogonal complement $V^\perp$. 
We marginalise the $M-1$ dimensional Gaussian over all the parallel affine subspaces of $V$, i.e., those which are of the form $W=\mathbf{w}+V$. We obtain the
conditional probability density of
$\mathbf{w}\hat{=}\mathbf{s}\in \R^{M-K-1}$ 
\begin{equation}
\begin{split}
  p(\mathbf{s}|\Phi)
=&\int_{V
} p_\mathrm{G} (\theta';\e_M,\tilde{\I}) \mathrm{d} S \\ \mbox{}=& p(\mathbf{s}),
\end{split}
\end{equation}
which is a mean zero, $M-K-1$ dimensional Gaussian with the identity matrix as the covariance matrix. 

As to the distribution of directions $\Phi$, isotropic Gaussians imply uniform directions on half hyperspheres. We decompose the vector as $\Phi=(\Phi_1,\Phi_2,\ldots,\Phi_{\tilde{K}})$ and further $\Phi_k=(\phi_1^k,\phi_2^k,\ldots,\phi_{M-2k}^k)$. Thus we have 
\begin{equation}
\begin{split}
p(\Phi)&=p(\Phi_1,\Phi_2,\ldots,\Phi_{\tilde{K}})\\
&
=p(\Phi_1)p(\Phi_2|\Phi_1)p(\Phi_3|\Phi_1,\Phi_2) \cdots 
\\&\quad \quad \quad 
p(\Phi_{\tilde{K}}|\Phi_1,\Phi_2,\ldots,\Phi_{\tilde{K}-1}),
\end{split}
\end{equation}
where 
\begin{equation}
\begin{split}
  p(\Phi_k|&\Phi_1,\Phi_2,\ldots,\Phi_{k-1})\\ 
&
=\frac{\Gamma\left(\left(M-2k+1\right)/2\right)}{\pi^{(M-2k+1)/2}}\prod_{i=1}^{M-2k-1} \sin^{M-2k-i}(\phi_i^k),
\end{split}
\end{equation}
$k=1,2,\ldots,\tilde{K}$.

So finally, 
\begin{equation} \label{eq:pW}
\begin{split}
  p(W)&=p(\mathbf{s},\Phi)=p(\mathbf{s}|\Phi)p(\Phi), 
\end{split}
\end{equation}
and the claim follows. 
\end{proof}

We have thus derived an analytic form for the probability density of the affine
subspace $W=W(\mathbf{s},\Phi)$ or, equivalently, the probability
density of the left nullspace of the reduced 
coefficient matrix $\mathbf{Y}'$,
assuming the model (\ref{eq:model2}) and Gaussian distributed
parameter vector $\theta$.

\begin{unmarginalised}


Let $\mathbf{Z}$ be a $M\times K$ matrix containing linearly independent column vectors $\mathbf{z}_k$, $k=1,\ldots,K$, where 
$K=\mathrm{rank}(\Y')$ and $\Ker\{\mathbf{Z}^\T\}=\Ker\{\Y'^\T\}$.  
In other words, $W_0$ is the
intersection of the hyperplanes \mbox{$\z_k^\T\theta'=\mathbf{0}$}, $k=1,2,\ldots,K$ in the
dual space, where we parameterise the hyperplanes by their signed
distance $\s=(s_1,s_2,\ldots,s_K)$ from the origin and their unit
normal vectors $\mathbf{v}_k=\mathbf{v}(\varphi^k)$ using the modified
spherical coordinates $\varphi^k$, $k=1,2,\ldots,K$, 
$\Phi=(\varphi^1,\varphi^2,\ldots,\varphi^K)$, as described in the
previous section. 
%
%
%

The subspace $Z$ spanned by $\{\z_1,\ldots,\z_K\}$ is orthogonal to any
affine subspace $W$ parallel to $W_0$ in $\R^{M-1}$. These
affine subspaces can be hence parameterised by the $K$ dimensional
vector $\mathbf{\mathbf{w}}$ representing the point of intersection between $W$ and $Z$. We define $\mathbf{\mathbf{w}}$ in
the orthonormal basis
$B=\{\mathbf{u}_1,\ldots,\mathbf{u}_K\}$, obtained from the vectors
$\mathbf{v}_1,\ldots,\mathbf{v}_K$ by the Gram--Schmidt
orthonormalisation procedure. The conditional probability density of
$\mathbf{w}$ is obtained as
%
\begin{equation}
\begin{split}
  p(\mathbf{w}|\mathbf{v}_1,\mathbf{v}_2,\ldots,\mathbf{v}_K)
=&\int_{W
} p_\mathrm{G} (\theta';\e_M,\tilde{\I}) \mathrm{d} S \\ \mbox{}=& p(\mathbf{w}),
\end{split}
\end{equation}
which is a mean zero, $K$ dimensional Gaussian with the identity matrix as the covariance matrix. 
For the Gaussian, we make the coordinate transform $\mathbf{w}=\mathbf{w}(\mathbf{s})$, where
\begin{equation}
\mathbf{s}=\tilde{\mathbf{V}}^\T \mathbf{w} \Leftrightarrow \mathbf{w}=\tilde{\mathbf{V}}^{-\T}\mathbf{s},
\end{equation}
whereas $\tilde{\mathbf{V}}=(\tilde{\mathbf{v}}_1\ \ldots\ \tilde{\mathbf{v}}_K)$ represent the direction vectors $\mathbf{v}_1,\ldots,\mathbf{v}_K$ in the basis $B$. Hence,
\begin{equation}
\begin{split}
  p(\mathbf{s}|\mathbf{v}_1,\mathbf{v}_2,\ldots,\mathbf{v}_K)
&=| \det \J \mathbf{w}(\mathbf{s}) |  
  p(\mathbf{w(\mathbf{s})}|\mathbf{v}_1,\mathbf{v}_2,\ldots,\mathbf{v}_K)\\
&= \frac{\mathrm{e}^{-\frac{1}{2}\| \tilde{\mathbf{V}}^{-\T} \mathbf{s} \|^2}}{(2 \pi)^{K/2} |\det(\tilde{\mathbf{V}})|}. 
\end{split}
\end{equation}

On the other hand, since the Gaussian distribution is isotropic, the distribution of directions is uniform with the density 
\begin{equation}
\begin{split}
p(\mathbf{v}_1,\mathbf{v}_2,\ldots,\mathbf{v}_K)
&=\frac{\Gamma(M/2-1/2)^{K}}{\pi^{K(M-1)/2}}. 
\end{split}
\end{equation}
Hence,
\begin{equation}
\begin{split}
p(\mathbf{s},\mathbf{v}_1,&\mathbf{v}_2,\ldots,\mathbf{v}_K)\\
=&p(\mathbf{s}|\mathbf{v}_1,\mathbf{v}_2,\ldots,\mathbf{v}_K)p(\mathbf{v}_1,\mathbf{v}_2,\ldots,\mathbf{v}_K)\\
=&\frac{
\Gamma(M/2-1/2)^{K}
\mathrm{e}^{-\frac{1}{2}\| \tilde{\mathbf{V}}^{-\T} \mathbf{s} \|^2}
}{\sqrt{2^{K}\pi^{KM}}|\det(\tilde{\mathbf{V}})|}. 
\end{split}
\end{equation}

Again, we use the modified spherical coordinates to parameterise the the reduced joint feature vectors, i.e., $\Z=\Z(\rb,\Phi)$, where $\rb=(r_1,\ldots,r_K)$ and $\Phi=(\varphi^1,\ldots,\varphi^K)$. In addition, we use the fact that $\s=\s(\rb)$ and $\mathbf{v}_k=\mathbf{v}_k(\varphi^k)$, $k=1,\ldots,K$. Now, we get 
\begin{equation}
\begin{split}
  p(&\rb,\Phi)\\&=\left|\det\frac{\partial(\mathbf{s},\mathbf{v}_1,\ldots,\mathbf{v}_K)}{\partial(\mathbf{r}, \Phi)}\right|p\left(\mathbf{s}(\mathbf{r}),\mathbf{v}_1(\varphi^1),\ldots,\mathbf{v}_K(\varphi^K)\right)\\
&=\frac{
\Gamma(M/2-1/2)^{K}
\mathrm{e}^{-\frac{1}{2}\| \tilde{\mathbf{V}}^{-\T} \mathbf{r}_{-1} \|^2}
}{\sqrt{2^{K}\pi^{KM}}|\det(\tilde{\mathbf{V}})|}\\
&\quad \quad \times
\prod_{k=1}^K\frac{\prod_{i=1}^{M-3} \sin^{M-2-i}(\varphi^k_i)}{r_k^2}
,
\end{split}
\end{equation}
where $\mathbf{r}_{-1}=(1/r_1,1/r_2,\ldots,1/r_K)$. 

Now, we are interested in the total probability of $\Ker\{\Y'^\T\}$
but there are infinitely many matrices that have the same
kernel. 
%
Therefore, we need
to marginalise over all those $\mathbf{Z}$ that has the kernel equal
to $\Ker\{\Y'^\T\}$. Hence,
\begin{equation} \ref{eq:pW}
\begin{split}
p(\Ker\{\Y'^\T\})&=\int_{\Ker\{\Z^\T\}=\Ker\{ \Y'^\T \}} p(\Z) \mathrm{d} \Z\\
&=\int_\Xi p(\Z_0 \Xi) | \det \J (\mathbf{r},\Phi)(\Xi) | \mathrm{d}\Xi,
\end{split}
\end{equation}
where $\Z_0$ is any fixed $\mathbf{Z}$.

This is an analytic form for the probability density for the kernel of the reduced joint feature matrix $\Y'$ 
assuming the model (\ref{eq:model2}) and the Gaussian distribution for the parameter vector $\theta$.

\end{unmarginalised}

\section{Mappings to Feature Distributions} \label{sec:featuremappings}



 In the previous section, we derived the general form for the dual
 distributions in the function of the parameters of the corresponding
 affine subspace. Now we discuss how we can extract interesting
 feature distributions from the dual distributions. 
The interesting feature distribution often has less parameters than
the affine subspace or 
one may be interested only in certain conditional
feature distributions, conditioned on some fixed 
a subset
of the features. In these cases the mapping from the feature
distribution to the affine subspace $W$ is not necessarily one-to-one
and onto. However, we may form the conditional distribution
\begin{equation} \label{eq:conditionald}
  p(\mathbf{s},\Phi|R)=\frac{p(\mathbf{s},\Phi)}{\iint_{R}
  p(\mathbf{s},\Phi) \dd \mathbf{s} \dd \Phi} \propto
  p(\mathbf{s},\Phi).
\end{equation}
conditioned on the restriction $R\subset \Omega$, where $R$ is
parameterised by the interesting part $\tilde{\x}\in \R^{\tilde{N}}$
of the feature distribution.
%

 \begin{theorem} {\it
    Let us consider a smooth submanifold $R$ of the parameter space $\Omega$ so that there is a differentiable bijective map between the parameters $\tilde{\mathbf{x}}\in X \subset \mathbf{R}^{\tilde{N}}$ and $(\mathbf{s},\Phi)\in R$ almost everywhere. Then
\begin{equation}
  p(\tilde{\x}|X) = \frac{\sqrt{\det(\Jx^\T \Jx)}
p(\mathbf{s}(\tilde{\x}),\Phi(\tilde{\x}))}
{\iint_R \sqrt{\det(\Jx^\T \Jx)}
p(\mathbf{s}(\tilde{\x}),\Phi(\tilde{\x})) \mathrm{\mathrm{d}}\mathbf{s} \mathrm{\mathrm{d}}\Phi },
\end{equation}
where $\mathbf{J}=\frac{\partial (\mathbf{s},\Phi)}{\partial \tilde{\mathbf{x}} }$.
}
  \end{theorem}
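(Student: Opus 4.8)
The plan is to recognise the statement as a change-of-variables formula for a probability density that has been restricted to a lower-dimensional submanifold $R\subset\Omega$, carried out by means of the area formula rather than the ordinary Jacobian substitution. The point of departure is the restricted density supplied by (\ref{eq:conditionald}): on $R$ we have $p(\mathbf{s},\Phi\,|\,R)\propto p(\mathbf{s},\Phi)$, the constant of proportionality being fixed by the requirement that the density integrate to one over $R$. The task is then to re-express this density, which is intrinsically defined with respect to the volume measure that $R$ inherits from the ambient parameter space, in the chart coordinates $\tilde{\x}$.

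First I would make the induced volume measure on $R$ explicit. The hypothesis furnishes a differentiable bijection $\tilde{\x}\mapsto(\mathbf{s}(\tilde{\x}),\Phi(\tilde{\x}))$ from $X$ onto $R$ almost everywhere, whose differential is precisely the rectangular matrix $\Jx=\partial(\mathbf{s},\Phi)/\partial\tilde{\x}$. Pulling back the Euclidean metric of the $(\mathbf{s},\Phi)$-space along this map yields the first fundamental form $\Jx^\T\Jx$ of $R$, so that the $\tilde{N}$-dimensional volume element on $R$ is $\sqrt{\det(\Jx^\T\Jx)}\,\dd\tilde{\x}$. Equivalently, for any measurable $A\subset X$ the area formula gives
\begin{equation}
\mathcal{H}^{\tilde{N}}\bigl(\{(\mathbf{s}(\tilde{\x}),\Phi(\tilde{\x})):\tilde{\x}\in A\}\bigr)=\int_A \sqrt{\det(\Jx^\T\Jx)}\,\dd\tilde{\x},
\end{equation}
which is the correct substitute for the (undefined) determinant of the non-square matrix $\Jx$.

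Next I would push the restricted density forward through the chart. Since the probability assigned to a region of $R$ equals the integral of $p(\mathbf{s},\Phi\,|\,R)$ against this induced volume element, substituting the area formula shows that, as a density in $\tilde{\x}$, the probability is governed by the product $\sqrt{\det(\Jx^\T\Jx)}\,p(\mathbf{s}(\tilde{\x}),\Phi(\tilde{\x}))$. Normalising so that it integrates to unity over $X$, equivalently over $R$, yields exactly the claimed formula for $p(\tilde{\x}\,|\,X)$, the normalising integral being the one displayed in the denominator.

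The main obstacle is that $\Jx$ is in general not square, since $\dim R=\tilde{N}$ is strictly smaller than the dimension of the $(\mathbf{s},\Phi)$-parameterisation of $\Omega$; the familiar change-of-variables theorem with $|\det\Jx|$ therefore does not apply, and one must instead invoke the area formula, whose Gram determinant $\sqrt{\det(\Jx^\T\Jx)}$ supplies the correct local volume-scaling factor. The remaining care is measure-theoretic: the assumption that the bijection holds only almost everywhere must be used to discard the exceptional set on which $\Jx$ drops rank or injectivity fails, so that these points carry no probability and affect neither the density nor its normalisation.
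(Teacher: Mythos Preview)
Your argument is correct and lands on the same Gram-determinant factor as the paper, but by a somewhat more abstract path. You invoke the area formula directly, identifying the induced volume element on $R$ as $\sqrt{\det(\Jx^\T\Jx)}\,\dd\tilde{\x}$ via the pulled-back first fundamental form. The paper instead proceeds concretely: at each point $\varphi_0\in R$ it fixes an orthonormal basis $\{\uu_1,\ldots,\uu_{\tilde{N}}\}$ of the tangent space $T_{\varphi_0}(R)$, introduces local tangent coordinates $\xi$ through $\varphi-\varphi_0=(\uu_1\ \cdots\ \uu_{\tilde{N}})\,\xi$, and applies the ordinary \emph{square}-Jacobian substitution to the composite map $\tilde{\x}\mapsto\xi$. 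The resulting $|\det\Jx_\xi|$ is then shown to equal $\sqrt{\det(\Jx^\T\Jx)}$ (the columns of $\Jx$ already lie in the tangent space, so the projector built from the orthonormal basis acts on them as the identity), which also makes the basis-independence explicit. Your route is cleaner and situates the result in its natural differential-geometric setting; the paper's is more elementary and self-contained, never needing to name the area formula or Hausdorff measure.
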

 \begin{proof}
    Let $\varphi:$ $X \rightarrow R$ so that $(\mathbf{s},\Phi)=\varphi(\tilde{\x})$ is continuous and invertible almost everywhere on $X\subset\R^{\tilde{N}}$.     
%
Given an orthonormal basis $B=\{\uu_1,\uu_2,\ldots,\uu_{\tilde{N}}\}$ on the tangent space $T_{\varphi_0}(R)$ 
\begin{equation}
  \varphi - \varphi_0 = \left(\uu_1\ \uu_2\ \cdots\ \uu_{\tilde{N}} \right)^\T 
(\xi_1\ \xi_2\ \cdots\ \xi_{\tilde{N}})^\T,
\end{equation}
where $\varphi_0=\varphi(\tilde{\mathbf{x}}_0)$ and $\varphi \in T_{\varphi_0}(R)$. 
The Jacobian of the mapping $\tilde{\x}\mapsto \xi$ is the $\tilde{N}\times\tilde{N}$ matrix 

\begin{equation}
  \Jx_\xi=\frac{\partial{\xi}}{\partial \tilde{x}}=\frac{\partial{\xi}}{\partial \varphi}\Jx_0=\left(\uu_1\ \uu_2\ \cdots\ \uu_{\tilde{N}} \right)^\T \Jx_0,
\end{equation}
where we have used the property $\xi_i=\uu_i^\T (\varphi-\varphi_0)$, $i=1,\ldots,\tilde{N}$. Thus
\begin{equation}
\begin{split}
  | \det \Jx_\xi |&= \sqrt{| \det \Jx_\xi |^2}
= \sqrt{\det (\Jx_0^\T \Jx_0)},
\end{split}
\end{equation}
which holds almost everywhere and is independent of the choice of the orthonormal bases. 
%
Using the substitution rule for integrals, the
conditional distribution of 
$\tilde{\x}$
takes the form 
\begin{equation} \label{eq:conditional}
p(\tilde{\x}|X) \propto \sqrt{\det(\Jx^\T \Jx)}
p(\mathbf{s}(\tilde{\x}),\Phi(\tilde{\x})).
\end{equation} 
\end{proof}

As $p(\tilde{{\x}}|X)$ is easily computed up to a global constant,
we may draw samples from it by generating a
MCMC chain with the Metropolis--Hastings sampling rule. Moreover, at least in cases where the interesting distribution covers a linear manifold, direct sampling methods can be applied similar to those derived in \cite{Brandt08}. 

\section{Experiments} \label{sec:examples}

 In this section we show two application examples of the dual distributions. We create confidence intervals for conics (Section~\ref{sec:conics}) and show how probabilistic point transfer can be constructed by using the covariance information of the estimated trifocal tensor (Section~\ref{sec:trifocaltensor}).

\subsection{Dual Distributions for Conics} \label{sec:conics}

%
 The points on a conic satisfy the homogeneous quadratic equation 
\begin{equation}
  \mathbf{x}^\T \mathbf{A} \mathbf{x} = 0,
\end{equation}
which is a bilinear equation in $\mathbf{x}$ and $\mathbf{A}$ is a symmetric $3\times 3$ matrix. 
This equation can be written in the form, using the \emph{Veronese embedding},  
\begin{equation} \label{eq:linearform}
\theta^\T \mathbf{y}=0,
\end{equation}
where $\theta \hat{=}(a_{11},a_{22},a_{33},2 a_{12},2 a_{23},2 a_{13})$ and $\mathbf{y}\hat{=}(x_1^2, x_2^2, x_3^2,x_1 x_2, x_2 x_3, x_1 x_3)$,
%
i.e., a conic forms a 5-dimensional linear subspace in the six dimensional joint feature space. We assume that the parameter vector estimate $\theta=\theta_0$ and its covariance matrix $\mathbf{C}_\theta$ are available.

If we dualise the relationship (\ref{eq:linearform}) above, we see that a fixed point on the
image plane determines a 5-dimensional linear subspace in the dual
space, i.e., the space of all conics that intersect the point on
the image plane, see Fig.~\ref{fig:conicduality}. Moreover, to characterise
the probability of the point on the image we may construct the \emph{total
probability (density) of all those conics containing the point} by using the
uncertainty distribution of the estimated conic. 
%
In other words, the dual distribution 
characterises the
confidence of the estimated conic 
by illustrating
 what has been learned from the locations of the points on the true conic.  

\begin{figure}[t]
\begin{center}
  \includegraphics[width=0.75\columnwidth]{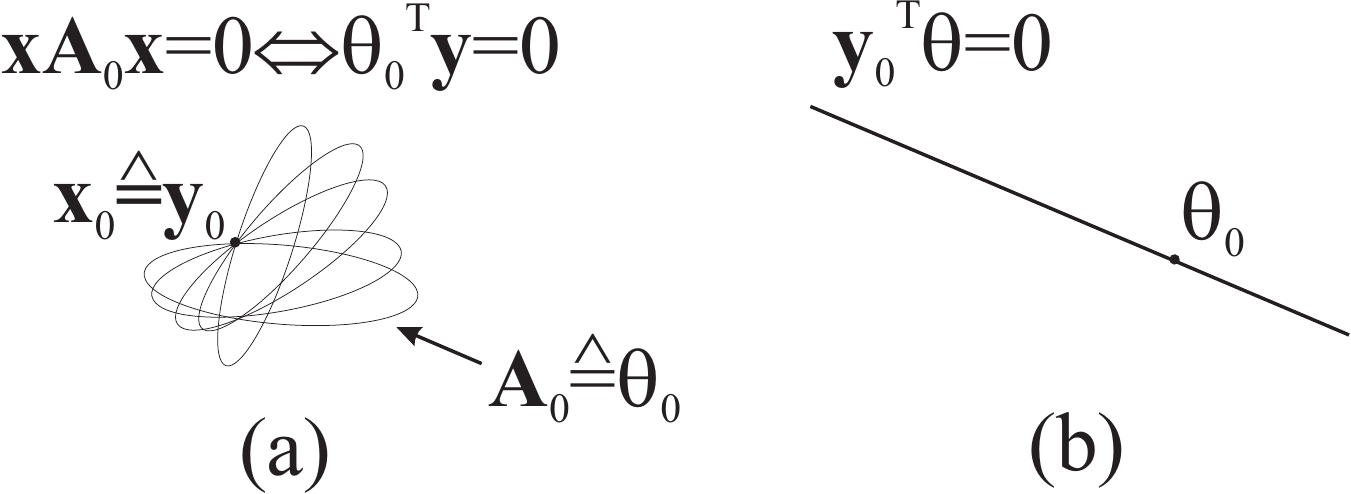}
\end{center} 
  \caption{(a) The dual distribution as the point distribution is here generated from the total probability of all the conics containing the point. (b) The subset of conics containing a given point is a five-dimensional linear subspace in the six-dimensional dual space, thus the probability of a point can be identified as the total probability of the corresponding linear subspace.
} \label{fig:conicduality}
\end{figure}

By the way of an example, we estimated the maximum likelihood estimate
and its covariance matrix for the conic containing 25 points, shown in
Fig.~\ref{fig:conicdual}a, assuming i.i.d.\@ Gaussian noise in the 2D
measurements. 
To evaluate the dual
density 
at the selected location
$(x_1,x_2)$ on the 2D plane, we need to evaluate (\ref{eq:dualdensity})
as well as compute the right magnification factor. 
As we parameterised the reduced joint feature vector $\y'$ by the modified
spherical coordinates, we may construct the mapping $(x_1,x_2)
\mapsto (\rho,\phi)$ and its Jacobian to evaluate the dual
pdf using (\ref{eq:conditional}). 
The dual pdf for the estimated conic is
illustrated in Fig.~\ref{fig:conicdual}b. The contours visualise the
fact that the we have a strong belief about the true conic points
near the training data but extrapolation beyond the training data
contains a substantial risk.

\begin{figure}[t]
\begin{center}
\subfigure[]{\includegraphics[width=0.48\columnwidth]{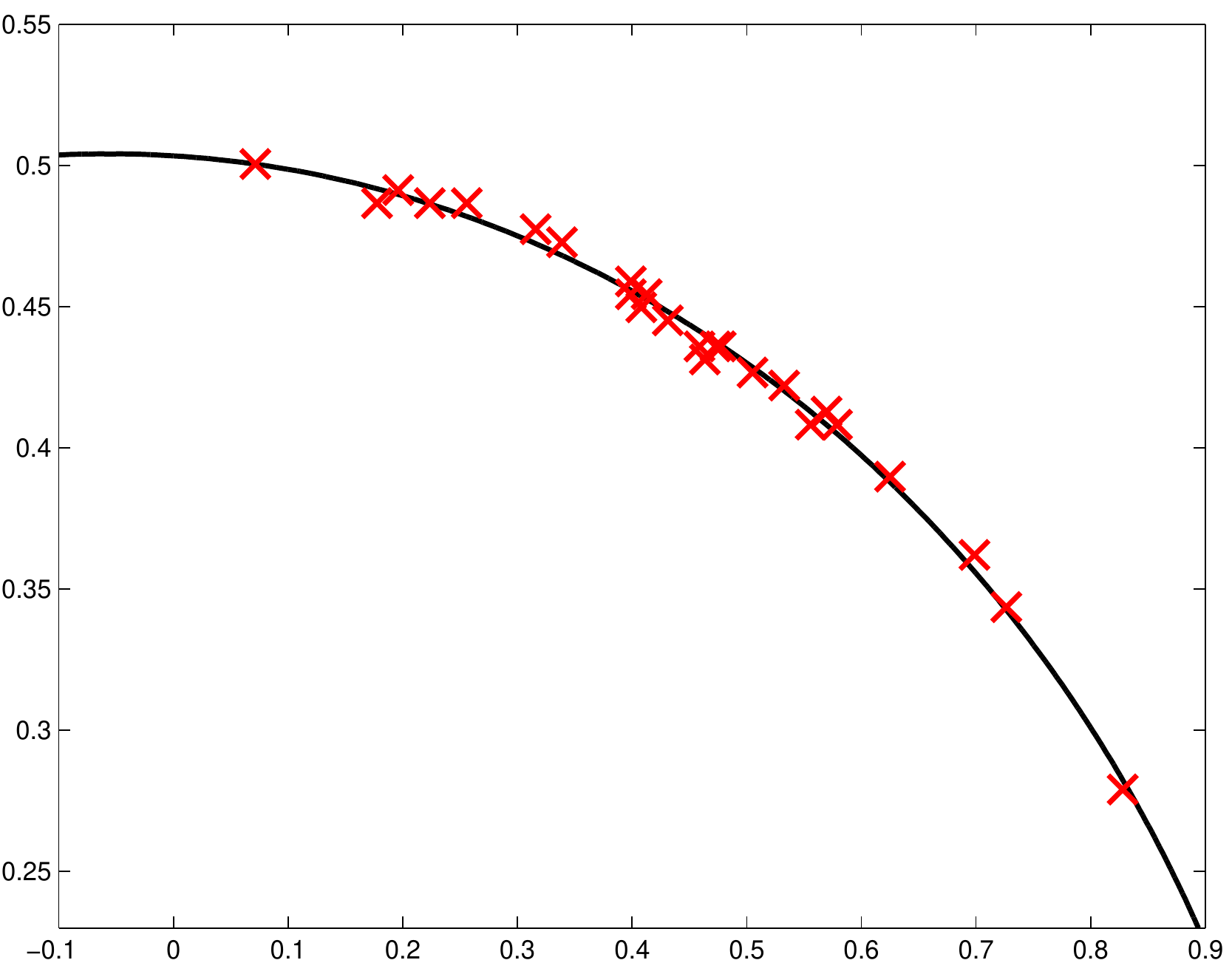}}
\subfigure[]{\includegraphics[width=0.48\columnwidth]{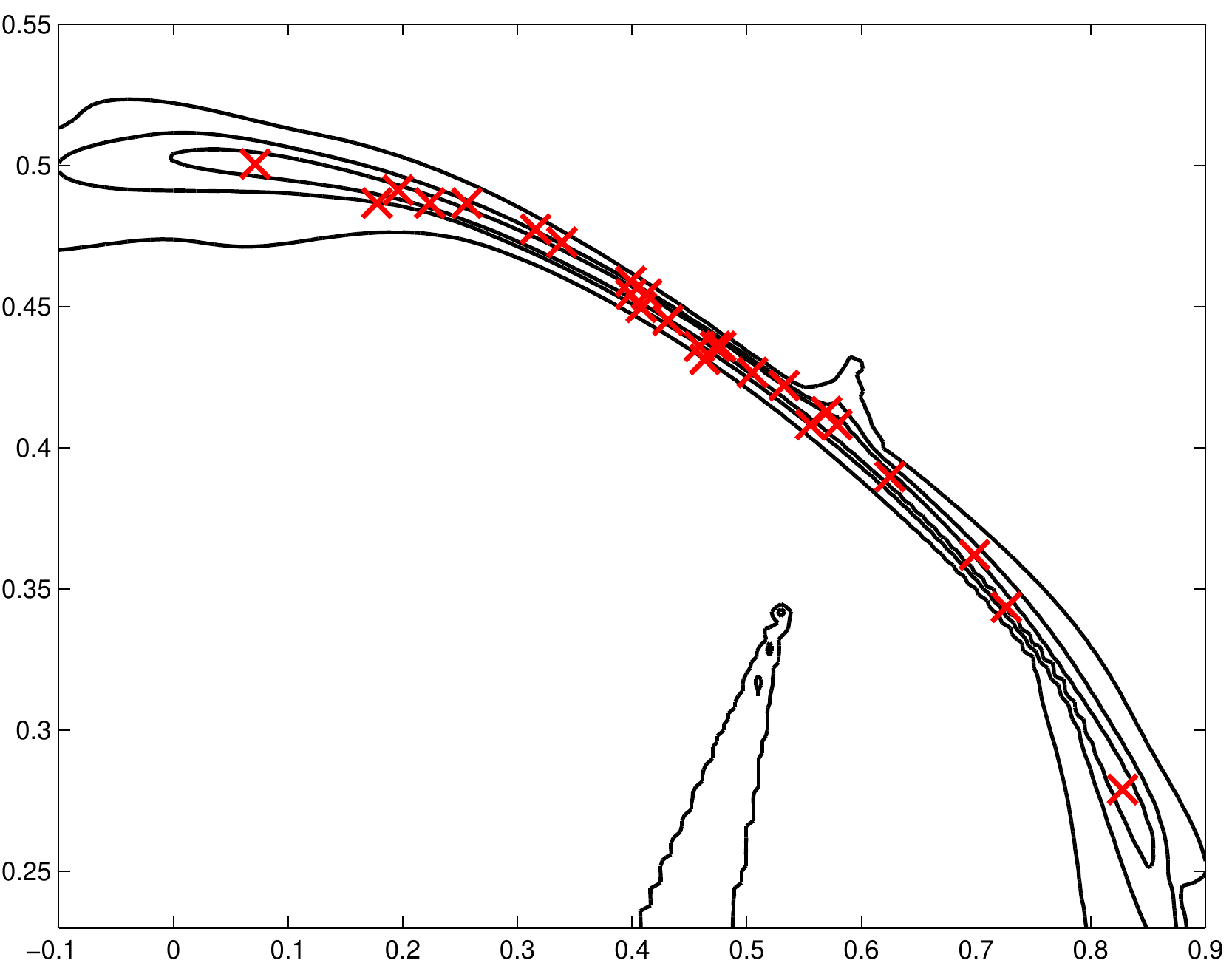}}
\end{center}
\caption{(a) The training data and the maximum likelihood estimate for the conic. (b) Contours of the dual pdf (point density) of the estimated conic characterising the current evidence where the points of the true conic locate.} \label{fig:conicdual}
\end{figure}

\begin{conic}

Using the asymptotic normality property of the MLE, we assume that
$\theta \sim N(\theta_0,\mathbf{C}_\theta)$ with the density function $p(\theta)$. Since $p(\theta)$ is Gaussian, we may analytically marginalise it over the  hyperplanes $S_\mathbf{n}$ parallel to $\mathbf{y}$ in the dual space, i.e.,
\begin{equation} \label{eq:lambda}
  p(\lambda|\mathbf{n})=\int p(\theta) \mathrm{d} S_\mathbf{n},
\end{equation}
where  $\mathbf{n}$ denotes the unit normal of the parallel hyperplanes
and $\lambda$ parameterises the (negative) signed distance of a parallel hyperplane from the
origin in the dual space, thus we define $\mathbf{y} =(\mathbf{n}^\T\ \lambda)^\T$. Since we marginalise the 6-dimensional Gaussian
distribution over the 5-dimensional subspace, $p(\lambda|\mathbf{n})$ is
one-dimensional normal distribution.

We may also derive the distribution of directions that define the
orientation of $\mathbf{n}$. This is equivalent in marginalising the
6-dimensional Gaussian distribution over the signed radius in
the modified\footnote{Instead of defining the radius parameter on the half infinite interval $[0,\infty)$, we define it on the full interval $(-\infty,\infty)$, and redefine the angle domains accordingly.} 6-dimensional spherical coordinates as 
\begin{equation}
  p(\mathbf{n})=\int_{-\infty}^\infty | \det \J \theta(\Phi) | p(\theta(\Phi)) \mathrm{d}r,
\end{equation}
where $\Phi$ denotes variables in the modified 6-dimensional spherical
coordinate system. This integral can be analytically computed, as will
be seen in next section.

The two distributions above can be combined to form the distribution for $\mathbf{y}$
\begin{equation}
     p(\mathbf{y}) \equiv p(\mathbf{n},\lambda) = p(\lambda|\mathbf{n}) p(\mathbf{n}). 
\end{equation}
The remaining problem is to transform the density of $\mathbf{y}$,
parameterised by $\mathbf{n}$ and $\lambda$, onto the
\emph{inhomogeneous} image coordinates $(u,v)$. The difficulty is that
the joint feature mapping $\mathbf{y}=\mathbf{y}(\mathbf{x})$ is not
one-to-one and onto. Therefore every $\mathbf{y}$ does not correspond
to a true image point. However, since the uncertainty of the the conic
reflect the behaviour of the training data, the variation of the conic
parameters should be small in the directions of forbidden
$\mathbf{y}$, so those can be neglected. We thus make an intermediate
transformation $\mathbf{z}(\y)=(y_1,y_2,y_3,y_4,y_4/y_5,y_4/y_6)$,
hence, $(u,v)\equiv(z_5,z_6)$.
%
%
%
%
Now, we may write
\begin{equation} \label{eq:dual}
  p(u,v)=\int_{z_1\cdots z_4}|\det \J \mathbf{z} (\y)| p(\y)\mathrm{d}z_1\cdots \mathrm{d}z_4,
\end{equation}
where we have marginalised the ``nuisance'' parameters $z_1,\ldots,z_4$ out from the distribution.

Sampling from the dual distribution (\ref{eq:dual}) is
straightforward. First we draw a sample from the six-dimensional
Gaussian $p(\theta)$ and compute the direction vector $\mathbf{n}$ of
the sample. Given $\mathbf{n}$, we then draw a sample for $\lambda$
from the 1D Gaussian (\ref{eq:lambda}). Since, $\y=(\mathbf{n},\lambda)$, we finally obtain the estimate for the point $(u,v)$, as $(y_4/y_5,y_4/y_6)$.

\end{conic}




\begin{table}[b]
  \caption{The trilinearities, adopted from \cite{Hartley00}.} \label{tab:trilinearities} 
\begin{center}
   \begin{tabular}{lr @{=} ll}
    Correspondence   & \multicolumn{2}{l}{Relation} & dof\\
\hline
    three points & $x^i {x'}^j {x''}^k \epsilon_{jqs} \epsilon_{krt} \mathcal{T}^{qr}_i$&$0_{st} $& 4\\ 
    two points, one line & $x^i {x'}^j l''_r \epsilon_{jqs} \mathcal{T}^{qr}_i$&$0_{s} $& 2\\
    one point, two lines & $x^i l'_q l''_r \mathcal{T}^{qr}_i$&$0 $& 1\\
    three lines & $l_p l'_q l''_r \epsilon^{piw} \mathcal{T}^{qr}_i$&$0^w $& 2\\
  \end{tabular}
\end{center}
\end{table}

\subsection{Probabilistic Point Transfer with an Uncertain Trifocal Tensor} \label{sec:trifocaltensor}
The geometry of three projective views is characterised by several incidence relations or trilinearities, which are collected into
Table~\ref{tab:trilinearities} \cite{Hartley00}. 
According to our
preferences, we could use any of the trilinearities to create dual
distributions or conditional dual distributions. 
As an example, we now illustrate how the trifocal point
transfer can be ``probabilised'' by constructing a dual distribution
for the trifocal tensor given estimates for the tensor and its
covariance matrix as well as a novel point match in two views. In this case, the dual distribution is the conditional position distribution of the transferred point in the third view.  
We use the nine point--point--point constraint equations (the
first relation in Table~\ref{tab:trilinearities}) of which only four equations are independent. Therefore the dimension of the affine
subspace of this example is $K=(M-1)-4=14$.

Given the point match $\m'\leftrightarrow \m''$ in the views two and three, the construction of the conditional probability density
$p(\m|\m',\m'',\mathcal{T},\C_\mathcal{T})$
is as follows. 
The trilinear relations for three points define the elements of $\Y$ in the model (\ref{eq:model2}). 
After constructing $\Y$, the whitened matrix $\Y'$ is obtained from
(\ref{eq:hyperplane2}). However, we regularised the whitening
transform by replacing $\tilde{\Lambda}$ by $\tilde{\Lambda}+\lambda
\I$, 
where $\lambda=10^{-8}$ since the number of data points was limited in our experiment 
and hence, due to overfitting of the covariance matrix, the 
smallest ones of the eighteen non-zero eigenvalues were (assuming a
non-degenerate configuration) relatively small.\footnote{The diagonal Tikhonov
regulariser was adequate here 
but a more elegant way
would have been investigating the numerical rank of the covariance
matrix and developing an automatic model selection scheme to determine the dimension of the
affine subspace. Degenerate configurations could be also handled in this
way.}
%
%
%
%
The probability density for the affine subspace, corresponding to
$\Y'$, is given by (\ref{eq:pW}) and the desired conditional probability
density values are finally obtained from (\ref{eq:conditional}), up to scale.
In short, to compute the conditional probability density value for a location of interest in the first view, the corresponding affine subspace parameters are computed from $\Y'$ and the probability density value of the corresponding affine subspace is weighted by the term containing the Jacobian of the transformation.

\begin{figure}[t]
\begin{center}
\includegraphics[width=0.49\columnwidth,trim={2cm 8cm 2cm 8cm},clip]{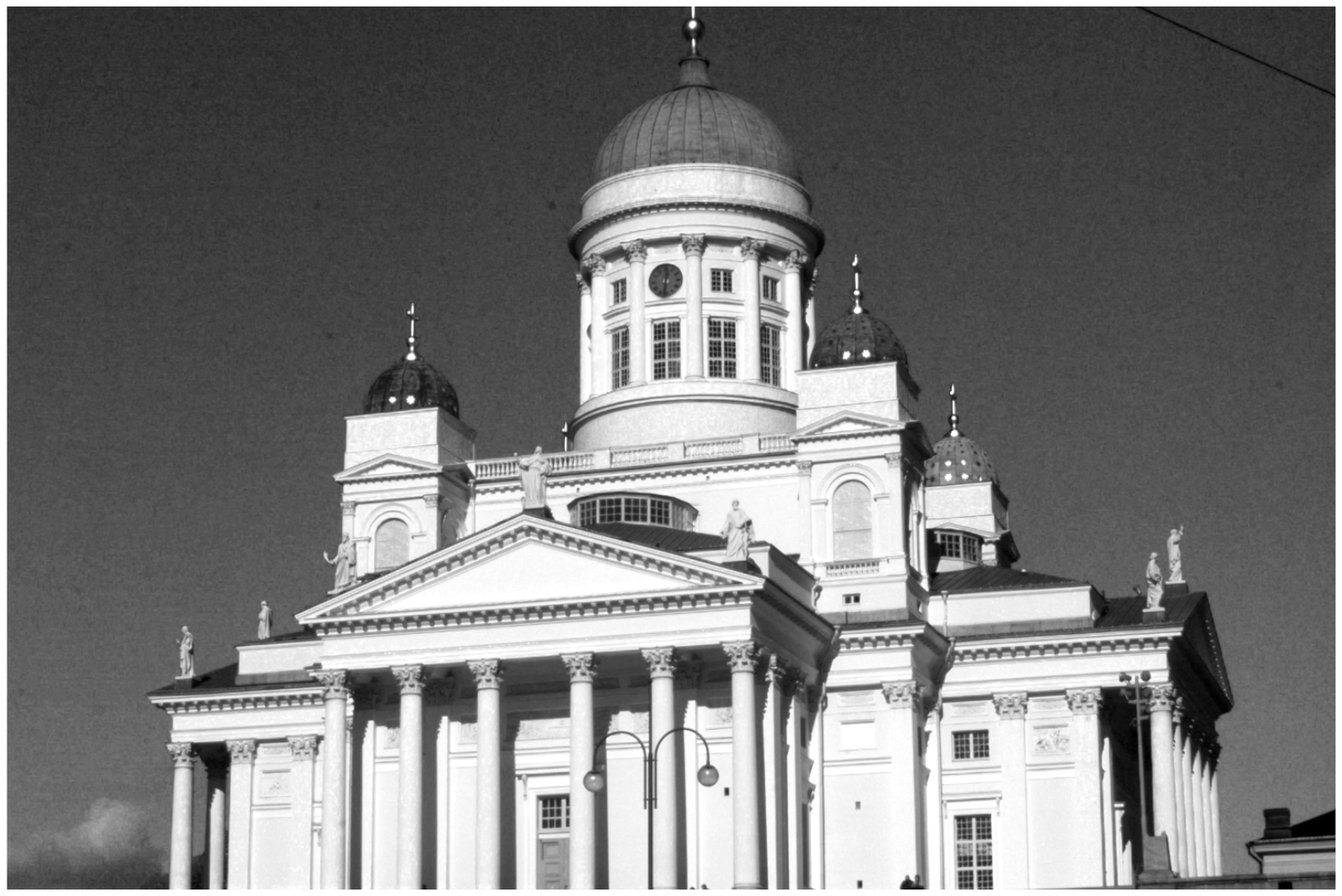}\includegraphics[width=0.49\columnwidth,trim={2cm 8cm 2cm 8cm}]{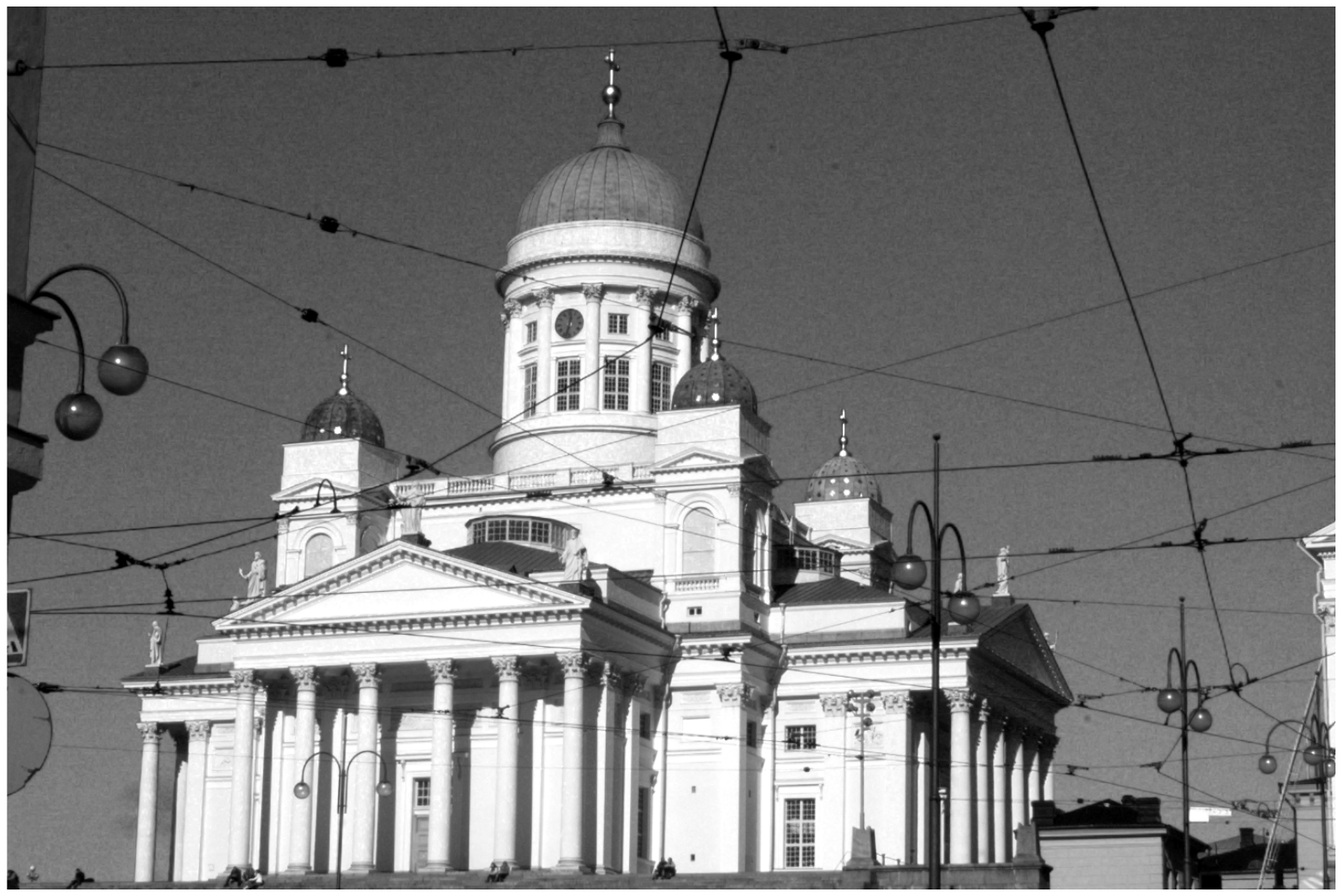}
\includegraphics[width=0.49\columnwidth,trim={2cm 8cm 2cm 8cm}]{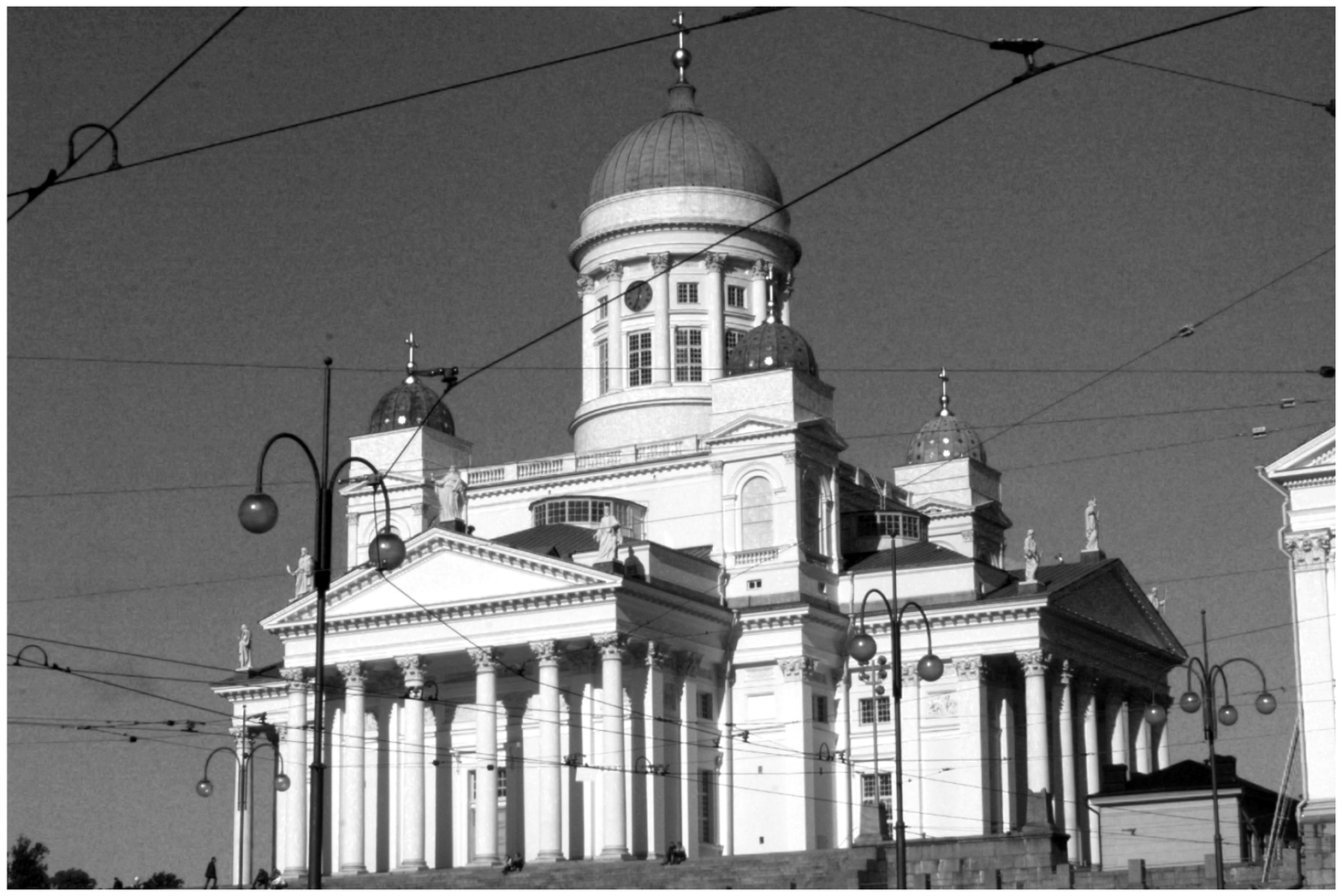}
\end{center}

\caption{Training images for estimating the trifocal tensor. 
} \label{fig:triplet}
\end{figure}

\begin{figure}[t]
\begin{center}
\subfigure[First view]{\includegraphics[width=0.32\columnwidth]{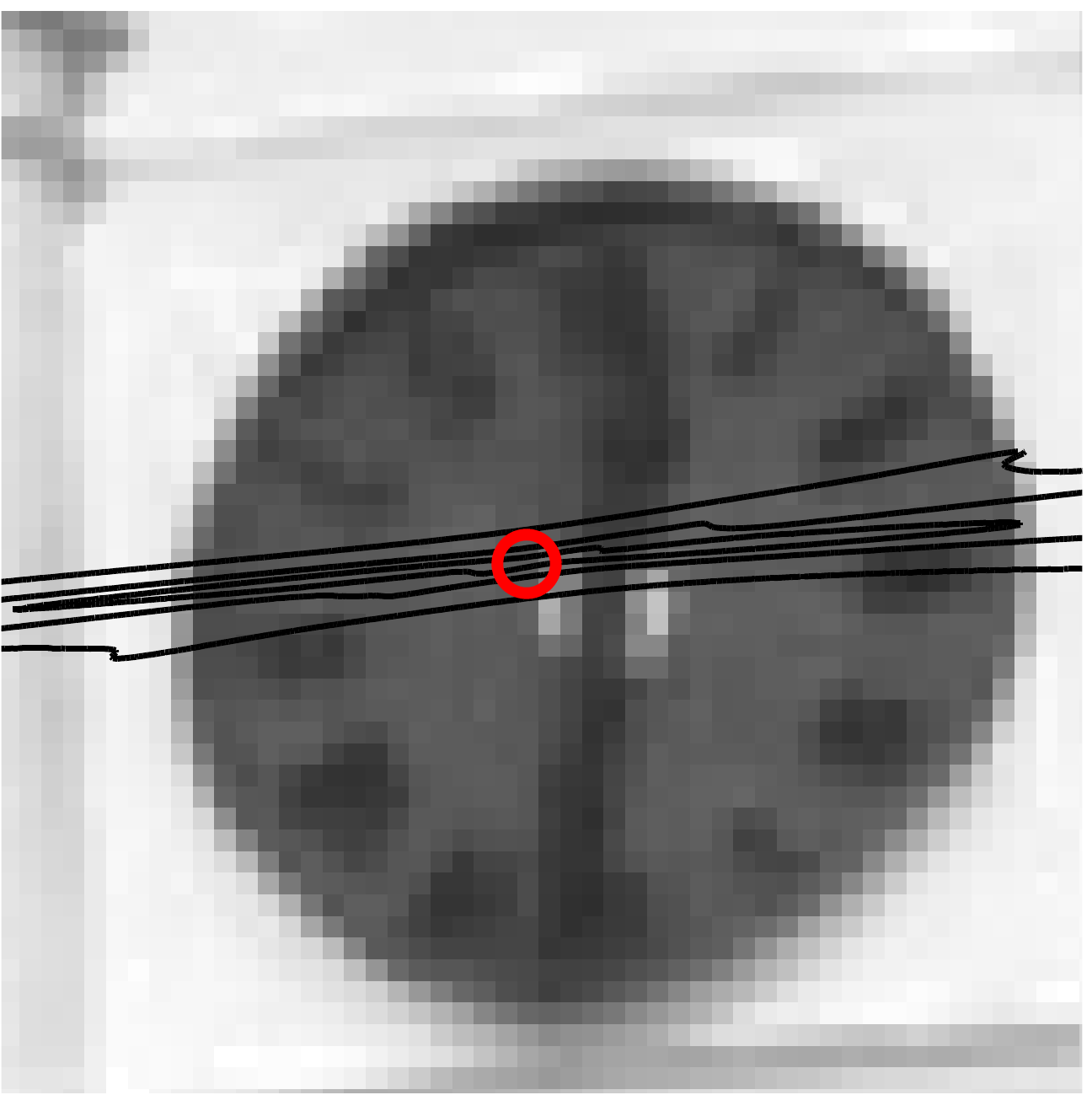}} 
\subfigure[Second view]{\includegraphics[width=0.32\columnwidth]{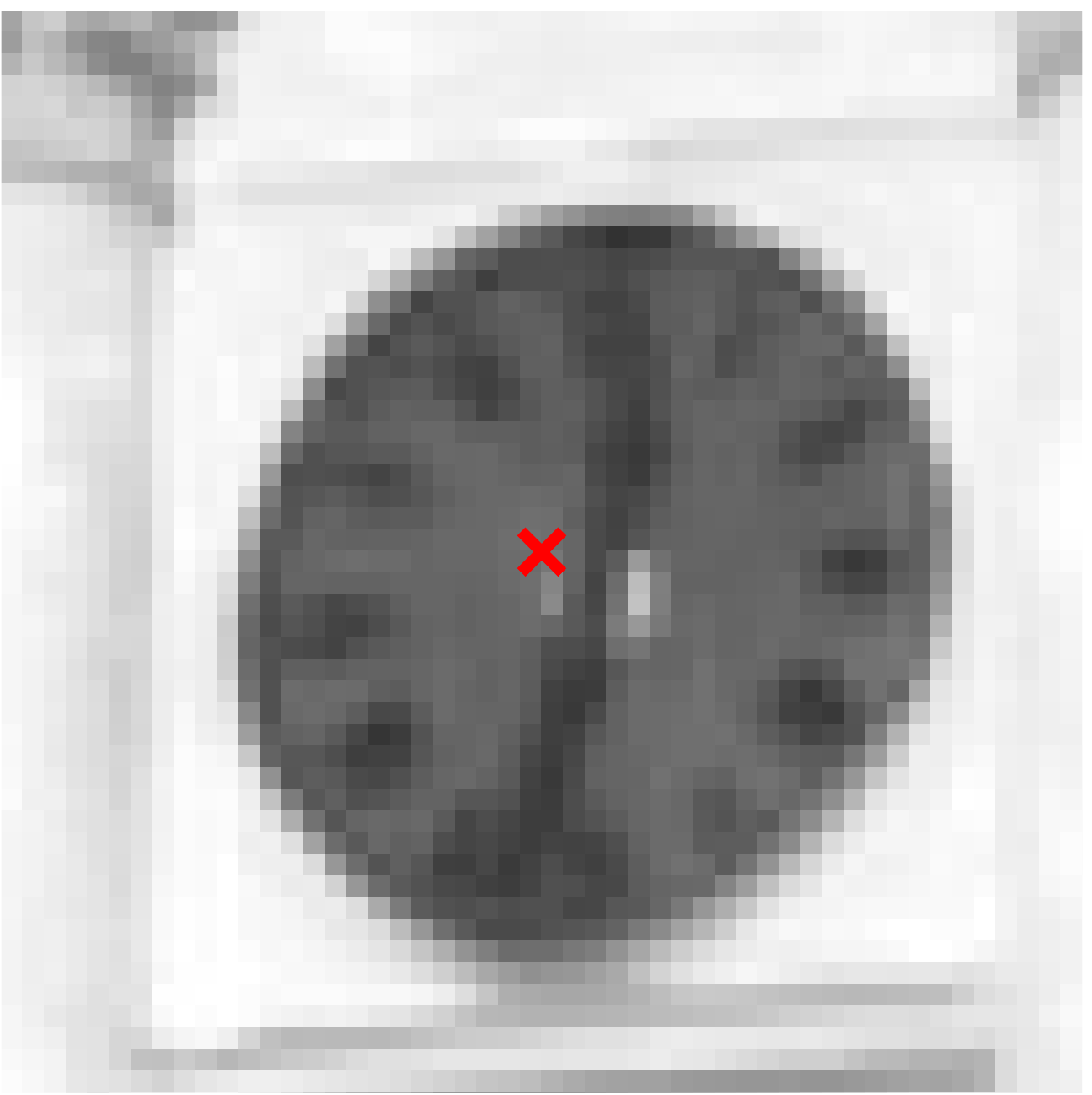}}
\subfigure[Third view]{\includegraphics[width=0.32\columnwidth]{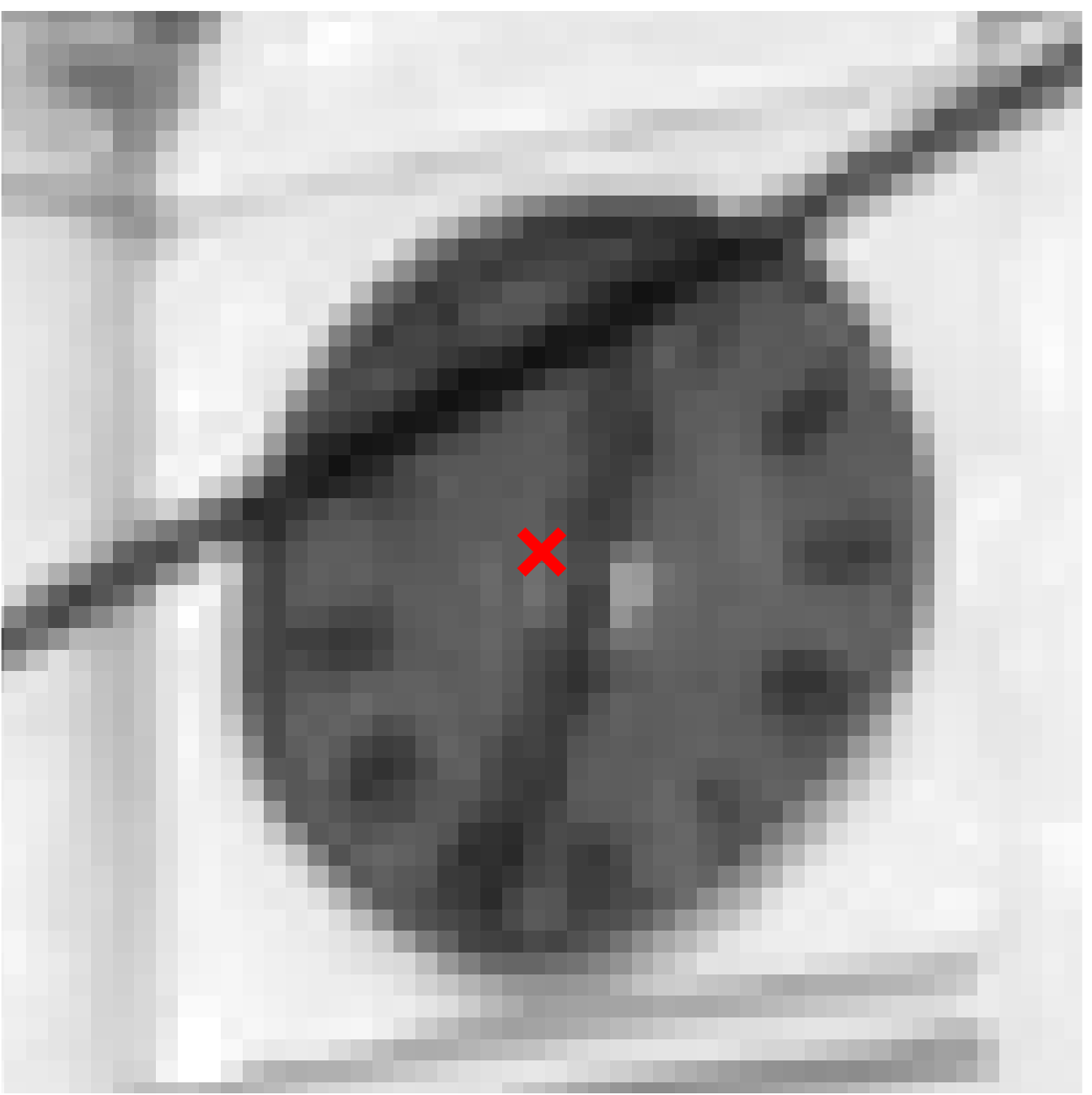}}\\
\subfigure[Contours]{\includegraphics[width=0.32\columnwidth]{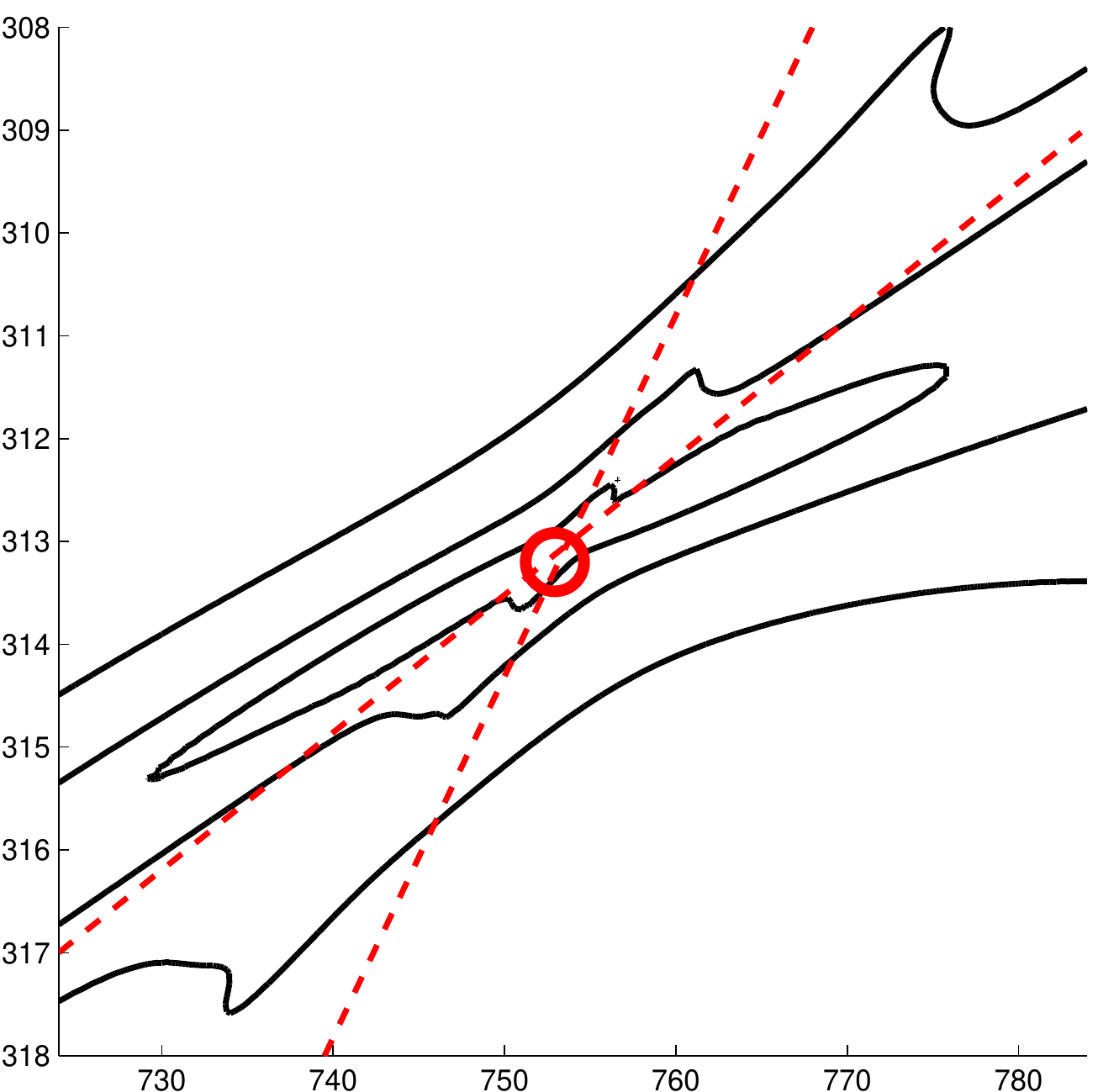}} 
\end{center}
\caption{Probabilistic point transfer. 
(a) Three equi-probability contours of the point transfer pdf in the
first view, at the levels $10^{-1}$, $10^{-2}$, and $10^{-3}$ times
the maximum value, conditioned on the two-view point match in the
views (b) two and (c) three. The circle indicates the transferred
point using the (deterministic) point transfer \cite{Hartley00} 
with
only the ML estimate for the trifocal tensor, and the two points in
the other views; (d) the same contours shown after scaling of the
y-axis and superimposing the ML epipolar lines (dashed) corresponding
to the points given in the second and third view. The contours are
closed curves surrounding the most likely match locations and
illustrate the fact there the trifocal transfer is more than the
epipolar transfer from the views two and three.} \label{fig:trifocaltransfer}
\end{figure}

To illustrate the conditional point transfer density, we estimated the
maximum likelihood trifocal tensor and its covariance matrix from 274
outlier free point correspondences for the image triplet shown in
Fig.~\ref{fig:triplet} assuming i.i.d. Gaussian noise in the
measurements \cite{Hartley00}. Then, by using a novel point
correspondence in the views two (Fig.~\ref{fig:trifocaltransfer}b) and three (Fig.~\ref{fig:trifocaltransfer}c), we computed the conditional
probability density in the first view, as reported above. To visualise
the shape of the pdf, we selected, three pdf values at the levels
$10^{-1}$, $10^{-2}$, and $10^{-3}$ times the maximum value
and
show the corresponding contours in Fig.~\ref{fig:trifocaltransfer}a and \ref{fig:trifocaltransfer}d. It
can be seen that the transferred density is
non-Gaussian while it indicates the feasible locations for the
correspondence. The pdf has its maximum close to the point
where the ML epipolar lines meet whereas the local shape of the peak is oriented towards the mean axis of the two epipolar lines. The pdf shape also seems to illustrate the well known fact that the trifocal constraint is
more versatile than the mere epipolar geometries between the three views.


\section{Discussion}

The dual distributions are a tool for Bayesian inference with uncertain multilinear models, used e.g.\@ in geometric
image analysis. According to the Bayesian paradigm, we handle
geometric relationships as probability distributions in contrast to
the traditional approaches, which are based on deterministic estimates.
Moreover, the theory provides tools for propagating the
geometric uncertainty information to the dual variables. This approach
helps especially in the
cases when there is no explicit form for the mapping but 
a multilinear incidence relation which indirectly connects the parameter space and
the observations.

As in the point--line case there are certain points and lines with a special role \cite{Brandt08}, likewise there are \emph{special subspaces} in the general case to be investigated in future. They arise from the affine subspaces spanned by the eigenvectors of the covariance matrix---the deterministic trifocal point transfer, for instance, should be constructed by identifying the most likely affine subspace in the Gaussian model using the point--point--point incidence relation. In addition, there are many other ways to utilise the special subspaces arising from the multiple constraint equations. However, one should be careful in the interpretations as covariance estimates are not generally invariant to the selection of the coordinate frame. 


This work is fundamentally a study of the generalised Radon transform for probability measures. Since the transform is based on integration over all the affine subspaces, a suitable parameterisation for affine subspaces is required. A natural algebraic choice 
would be the Grassmannian parameterisation, and it is likely that it 
would provide certain algebraic benefits that are to be investigated in future. For instance, one should study the transforms with respect to a natural Riemannian metric, invariant to motions of the coordinate system, to make the approach invariant to Gauge transforms. Proceeding in this direction would clarify the role of the dual distributions in between Santalo's and Gelfand's schools of integral geometry.


As far as the numerical development is concerned, more efficient numerical tools would be advantageous for geometric inference.  The study of special subspaces and efficient sampling from the dual distributions in the general case are additionally to be investigated in future. Likewise, the study of marginal distributions, and proper handling of nonlinear geometric constraint manifolds are part of the future work.



\section{Conclusions} \label{sec:conclusions}

In this paper we have shown how the parameter distributions of
multilinear models can be dualised.  
The proposed approach provides means for a pure statistical treatment
of multilinear relations where the uncertainty of the geometric entity is taken into account. The dual distributions are closely connected to integral geometry and have analytic forms with relatively small assumptions.
%
%
%
%
%
%
%
%
%
%
We demonstrated the applicability of the theory by characterising the confidence of estimated conics and constructed the probabilistic trifocal point
transfer by using an uncertain trifocal tensor with its estimated covariance
matrix.
In future, an interesting research direction is to investigate
the dual distributions from the view point of integral geometry to
deeply understand 
the theoretical connections 
as well as to
develop additional numerical methods for applications.

%


%
%



\bibliographystyle{IEEEtran}
\bibliography{cvpr2008,emt}

\begin{thebibliography}{10}
\providecommand{\url}[1]{#1}
\csname url@samestyle\endcsname
\providecommand{\newblock}{\relax}
\providecommand{\bibinfo}[2]{#2}
\providecommand{\BIBentrySTDinterwordspacing}{\spaceskip=0pt\relax}
\providecommand{\BIBentryALTinterwordstretchfactor}{4}
\providecommand{\BIBentryALTinterwordspacing}{\spaceskip=\fontdimen2\font plus
\BIBentryALTinterwordstretchfactor\fontdimen3\font minus
  \fontdimen4\font\relax}
\providecommand{\BIBforeignlanguage}[2]{{%
\expandafter\ifx\csname l@#1\endcsname\relax
\typeout{** WARNING: IEEEtran.bst: No hyphenation pattern has been}%
\typeout{** loaded for the language `#1'. Using the pattern for}%
\typeout{** the default language instead.}%
\else
\language=\csname l@#1\endcsname
\fi
#2}}
\providecommand{\BIBdecl}{\relax}
\BIBdecl

\bibitem{Hartley00}
R.~Hartley and A.~Zisserman, \emph{Multiple View Geometry in Computer
  Vision}.\hskip 1em plus 0.5em minus 0.4em\relax Cambridge, 2000.

\bibitem{Kanatani96}
K.~Kanatani, \emph{Statistical Optimization for Geometric Computation}, ser.
  Machine Intelligence and Pattern Recognition.\hskip 1em plus 0.5em minus
  0.4em\relax Amsterdam: Elsevier Science, 1996, vol.~18.

\bibitem{Fischler81}
M.~Fischler and L.~Bolles, ``Random sample consensus: A paradigm for model
  fitting with applications to image analysis and automated cartography,''
  \emph{Commun. ACM}, vol.~24, pp. 381--385, 1981.

\bibitem{Brandt06f}
S.~S. Brandt, ``Maximum likelihood robust regression by mixture models,''
  \emph{Journal of Mathematical Imaging and Vision}, vol.~25, pp. 25--48, 2006.

\bibitem{Agarwal05}
A.~Agarwal, C.~V. Jawahar, and P.~J. Narayanan, ``A survey of planar homography
  estimation techniques,'' International Institute of Information Technology,
  Hyderabad, Tech. Rep. IIIT/TR/2005/12, 2005.

\bibitem{Torr97a}
P.~H.~S. Torr and A.~Zisserman, ``Robust parameterization and computation of
  the trifocal tensor,'' \emph{Image and Vision Computing}, vol.~15, pp.
  591--605, 1997.

\bibitem{Cross98}
G.~Cross and A.~Zisserman., ``Quadric surface reconstruction from dual-space
  geometry.'' in \emph{Proc. ICCV}, Bombay, India, 1998, pp. 25--31.

\bibitem{Zhang98}
Z.~Zhang, ``Determining the epipolar geometry and its uncertainty: A review,''
  \emph{Int. J. Comput. Vis.}, vol.~27, no.~2, pp. 161--195, 1998.

\bibitem{Triggs00}
B.~Triggs, P.~McLauchlan, R.~Hartley, and A.~Fitzgibbon, ``Bundle adjustment --
  a modern synthesis,'' in \emph{Vision Algorithms: Theory and Practice}, ser.
  LNCS, B.~Triggs, A.~Zisserman, and R.~Szeliski, Eds., vol. 1883.\hskip 1em
  plus 0.5em minus 0.4em\relax Berlin/New York: Springer, 2000, pp. 298--372.

\bibitem{Scoleri08}
T.~Scoleri, ``Fundamental numerical schemes for parameter estimation in
  computer vision,'' Ph.D. dissertation, School of Mathematical Sciences,
  Univeristy of Adelaide, 2008.

\bibitem{Brandt09}
S.~S. Brandt, ``Dual distributions for multilinear geometric entities,'' in
  \emph{Proc. CVPR}, June 2009, pp. 2679--2686.

\bibitem{Brandt08}
------, ``On the probabilistic epipolar geometry,'' \emph{Image and Vision
  Computing}, vol.~26, no.~3, pp. 405--414, 2008.

\bibitem{Santalo53}
L.~Santaló, \emph{Introduction to integral geometry}.\hskip 1em plus 0.5em
  minus 0.4em\relax Paris: Hermann, 1953.

\bibitem{Gelfand03}
I.~Gelfand, S.~Findkin, and M.~Graey, \emph{Selected Topics in Integral
  Geometry}, ser. Translations of Mathematical Monographs.\hskip 1em plus 0.5em
  minus 0.4em\relax American Mathematical Society, 2003, vol. 220.

\bibitem{Karr92}
A.~F. Karr, \emph{Probability}.\hskip 1em plus 0.5em minus 0.4em\relax
  Springer, 1992.

\bibitem{Triggs01}
B.~Triggs, ``Joint feature distributions for image correspondence,'' in
  \emph{Proc. ICCV}, vol.~II, Vancouver, Canada, 2001, pp. 201--208.

\bibitem{Faugeras01}
O.~Faugeras and Q.-T. Luong, \emph{Geometry of Multiple Images}.\hskip 1em plus
  0.5em minus 0.4em\relax The MIT Press, 2001.

\bibitem{Triggs95}
B.~Triggs, ``Matching constraints and the joint image,'' in \emph{Proc. ICCV},
  Cambridge, MA, June 1995.

\bibitem{Weeks85}
J.~R. Weeks, \emph{The Shape of Space: How to Visualize Surfaces and
  Three-Dimensional Manifolds}.\hskip 1em plus 0.5em minus 0.4em\relax Marcel
  Dekker, 1985.

\end{thebibliography}


\begin{appendix}

\section*{A. Modified Spherical Coordinates in $\R^N$} \label{app:ndim}

In this paper, we use modified spherical coordinates in
$N$-dimensional space. That is, we assign a sign for the radius in the
conventional $N$-dimensional spherical coordinates \cite{Weeks85} and parameterise the directions using points only
on the other half of the unit hypersphere. This parameterisation is
natural for one-dimensional subspaces in $\R^N$ as the directions then
uniquely parameterise the linear subspaces almost everywhere and the
(signed) radial parameter parameterises the points in the subspace.

Assume that\footnote{If $N=1$ or $N=2$ the construction is similar but these cases are omitted here for simplicity.} $N\geq 3$ and let $\rho \in \R$ be the radial coordinate and $\phi_1,\phi_2,\ldots,\phi_{N-1}$ be the angular coordinates so that $\phi_1$ takes values between 0 and $\pi/2$, $\phi_2,\ldots,\phi_{N-2}$ are between $0$ and $\pi$ and $\phi_{N-1}$ is between $0$ and $2\pi$.
The Cartesian coordinates $x_i$, $i=1,\ldots,N$ are then defined as  
\begin{equation}
\begin{split}
  x_1&=\rho \cos(\phi_1), \\
  x_{i}&
=\rho \cos(\phi_{i}) \prod_{k=1}^{i-1} \sin(\phi_k), \quad i=2,3,\ldots,N-1,\\
  x_N&=\rho \prod_{i=1}^{N-1} \sin(\phi_i).
\end{split}
\end{equation}
The inverse transform is 
\begin{equation}
\begin{split}
\rho &=  \sign (x_1) \|\mathbf{x} \|, \quad 
%
\phi_{1}=\arctan\left(\sqrt{\frac{\|\x\|^2}{x_1^2}-1}\right), \\
\phi_{i}&=\arctan\left(\frac{\sqrt{\sum_{k={i+1}}^{N}{x_k^2}}}{\sign(x_1) x_{i}}\right),  \quad i=2,3,\ldots N-2, \\
\phi_{N-1}&=
\arctan({\sign(x_1)x_N},{\sign(x_1)x_{N-1}}),
%
\end{split}
\end{equation}
%
and the volume element is obtained as 
\begin{equation}
\begin{split}
&\left| \det \frac{\partial(x_1,x_2,\ldots,x_N)}{\partial(\rho,\phi_1,\phi_2,\ldots,\phi_{N-1})} \right|
\dd \rho \dd \phi_1 \dd \phi_2 \cdots \dd\phi_{N-1}\\
&\ = | \rho|^{N-1} \left(\prod_{k=1}^{N-2} \sin^{N-k-1}(\phi_k) \right) \dd \rho \dd \phi_1 \dd\phi_2 \cdots \dd\phi_{N-1}.
\end{split}
\end{equation}

\section*{B. Proof of Lemma 1} \label{sec:lemmaproof}

\begin{proof} It is well known that an orthogonal projection matrix $\proj$ may be decomposed as $\proj=\mathbf{U}\mathbf{U}^\T$ where the columns of $\mathbf{U}$ form an orthogonal basis of the range of $\proj$. We hence form an orthonormal basis for the range from the basis $\mathbf{p}_i=\mathbf{P}\mathbf{e}_i$, $i=1,\ldots,K$ by using the Gram--Schmidt orthonormalisation procedure. Now, let 
\begin{equation} \label{eq:orthogonalisation}
\begin{split}
  \mathbf{u}_1&=\frac{\proj \e_1}{\| \proj \e_1\|}, \quad 
  \mathbf{u}_2
=\frac{\proj \e_2-( \mathbf{u}_1^\T\proj \e_2)\mathbf{u}_1}{\| \proj \e_2- ( \mathbf{u}_1^\T\proj \e_2)\mathbf{u}_1\|},\quad 
\ldots,\\
\mathbf{u}_{K}
&=\frac{\proj \e_{K} - \sum_{i=1}^{K-1}( \mathbf{u}_i^\T\proj \e_K)\mathbf{u}_i}{\| \proj \e_{K}- \sum_{i=1}^{K-1}( \mathbf{u}_i^\T\proj \e_K)\mathbf{u}_i\|}.
\end{split}
\end{equation}
and let $\Lo=(\mathbf{u}_1\ \mathbf{u}_2\ \ldots\ \mathbf{u}_{K})$, i.e., the columns of $\Lo$ form an orthonormal basis for the range of $\proj$, and $\proj=\Lo \Lo^\T$. 

We need to show that $\Lo$ is additionally a  lower triangular matrix which has a strictly positive diagonal. We note that 
\begin{equation}
\begin{split}
 \mathbf{u}_1&=\frac{\proj \e_1}{\| \proj \e_1\|} =
\frac{ \Lo \Lo^\T \e_1}{\|  \Lo \Lo^\T \e_1\|} =
\frac{\sum_{k=1}^K \mathbf{u}_k \mathbf{u}_k^\T \e_1}{\|  \sum_{k=1}^K \mathbf{u}_k \mathbf{u}_k^\T \e_1\|}\\
&
=\frac{u_{11} \mathbf{u}_1  +\overset{=0}{\overbrace{\sum_{k=2}^K {u}_{1k} \mathbf{u}_k}}}{\|  \sum_{k=1}^K {u}_{1k} \mathbf{u}_k\|},
\end{split}
\end{equation}
where the second term in the nominator is zero because the vectors $\mathbf{u}_k$, $k=1,2,\ldots,K$ are linearly independent by assumption. It follows that $u_{1k}=0$, $k=2,\ldots,K$. Thus, $u_{11}/|u_{11}|=1 \implies u_{11}>0$. Similarly, 
\begin{equation}
 \mathbf{u}_2
=\frac{u_{22} \mathbf{u}_2  \overset{=0}{\overbrace{-( \mathbf{u}_1^\T\proj \e_2)\mathbf{u}_1+\sum_{k\neq 2}^K {u}_{2k} \mathbf{u}_k}}}{\|  \sum_{k=1}^K {u}_{2k} \mathbf{u}_k\|},
%
\end{equation}
that implies $u_{2k}=0$, $k=3,\ldots,K$ and $u_{22}>0$. Likewise, $u_{lk}=0$, and $u_{ll}>0$ when $k=l+1,\ldots,K$, hence, $\Lo$ is a lower triangular matrix with strictly positive diagonal. 

We still need to show that the representation is unique. Let us
assume the contrary, i.e., there are two different lower triangular
matrices $\Lo_1=(\mathbf{u}_1\ \mathbf{u}_2\  \cdots\  \mathbf{u}_K)$ and $\Lo_2=(\mathbf{v}_1\ \mathbf{v}_2\ \cdots\ \mathbf{v}_K)$ which have the the indicated
properties. Then $\proj=\Lo_1 \Lo_1^\T=\Lo_2 \Lo_2^\T$ and 
\begin{equation}
{\proj \e_1}=u_{11}\mathbf{u}_1=v_{11}\mathbf{v}_1. 
\end{equation}
Since $\mathbf{u}_1$ and $\mathbf{v}_1$ are unit vectors, it follows that $|u_{11}|=|v_{11}|$. The diagonal elements are additionally strictly positive so $u_{11}=v_{11}$ that implies $\mathbf{u}_1=\mathbf{v}_1$.
Similarly, we see that $\mathbf{u}_l=\mathbf{v}_l$, when $l=2,\ldots,K$, i.e., $\Lo_1\equiv\Lo_2$ that contradicts the assumption, hence, $\Lo$ must be unique. 
\end{proof}


\end{appendix}

\end{document}